\documentclass{article}

\usepackage[numbers,sort,compress]{natbib}
\usepackage[preprint]{neurips_2026}

\usepackage{microtype}
\usepackage{graphicx}
\usepackage{subcaption}
\usepackage{booktabs}
\usepackage{hyperref}
\usepackage{enumitem}
\usepackage{tabularx}
\usepackage{multirow}
\usepackage{array}
\usepackage{wrapfig}

\usepackage{amsmath,amssymb,amsfonts,amsthm}
\usepackage{mathtools}
\usepackage{dsfont}
\usepackage{xcolor}

\usepackage{algorithm}
\usepackage{algorithmic}

\usepackage{caption}
\usepackage{subcaption}
\DeclareCaptionSubType{algorithm}

\usepackage[capitalize,noabbrev]{cleveref}
\usepackage{cleveref}
\crefname{subalgorithm}{Algorithm}{Algorithms}
\Crefname{subalgorithm}{Algorithm}{Algorithms}

\newtheorem{thm}{Theorem}[section]
\newtheorem{lem}[thm]{Lemma}

\newtheorem{observation}[thm]{Observation}

\newcommand{\closest}{\operatorname{closest}}
\newcommand{\cluster}{\operatorname{cluster}}
\newcommand{\cost}{\operatorname{cost}}
\newcommand{\NN}{\operatorname{NN}}
\DeclareMathOperator{\OPT}{OPT}

\newcommand{\tcp}[1]{\hfill$\triangleright$~#1}

\newcolumntype{C}[1]{>{\centering\arraybackslash}p{#1}}

\newcommand{\std}[1]{\ensuremath{\,{\scriptstyle\pm #1}}}

\newcommand{\pmstd}[2]{\ensuremath{#1\std{#2}}}

\newcommand{\scicost}[3]{(\ensuremath{#1\std{#2})\times 10^{#3}}}

\newcommand{\best}[1]{\ensuremath{\mathbf{#1}}}
\newcommand{\second}[1]{\ensuremath{\underline{#1}}}

\begin{document}


\title{
Simple KNN-Based Outlier Detection \\ Achieves Robust Clustering
}

\author{\textbf{Tianle Jiang},
  \textbf{Yufa Zhou}
  \\
  Duke University\\
  \texttt{\{tianle.jiang,yufa.zhou\}@duke.edu}
}



\maketitle

\begin{abstract}
Being robust to the presence of outliers is crucial for applying clustering algorithms in practice.
In the $\textit{robust $k$-Means}$ problem (i.e., $k$-Means with outliers), the goal is to remove $z$ outliers and minimize the $k$-Means cost on the remaining points.
Despite the close connection between robust $k$-Means and outlier detection, both theoretical and empirical understanding of the effectiveness of $\textit{classic outlier detection heuristics}$ for robust $k$-Means remains limited.
In this paper, we prove that under a practical assumption on the optimal cluster sizes, simply removing points with large $K$-Nearest-Neighbor distances achieves performance comparable to prior work in terms of approximation guarantees: it yields a constant-factor reduction from robust $k$-Means to standard $k$-Means, without introducing additional centers or discarding extra outliers, as is commonly required by existing approaches.
Empirically, experiments on real-world datasets show that our method outperforms or matches several more sophisticated algorithms in terms of clustering cost and runtime\footnote{Code: \url{https://github.com/MasterZhou1/Robust-Clustering}}. 
These results demonstrate that simple KNN-based heuristics can be surprisingly effective for robust clustering, highlighting new opportunities to bridge techniques from outlier detection and clustering.

\end{abstract}

\section{Introduction}

Clustering is a fundamental problem in unsupervised learning and has been studied extensively in machine learning, data mining, and theoretical computer science. However, it is well known that widely used clustering objectives such as $k$-Median and $k$-Means are highly sensitive to noise and outliers \citep{CharikarKMN01, ChawlaG13}. In particular, even a single data point that lies far from the underlying clusters can significantly distort the solution and lead standard clustering algorithms to select poor centers. Since noise and corrupted points are ubiquitous in real-world datasets, the design of efficient and practical clustering algorithms that are robust to outliers has attracted sustained attention \cite{ChawlaG13, GuptaKLMV17, ImQMSZ20, HuangFH0024}.

Most work in this area focuses on the robust $k$-Clustering problem (also known as $k$-Clustering with Outliers), first formalized by \citet{CharikarKMN01}. In this problem, the algorithm is given an outlier budget $z$ and aims to minimize a standard clustering objective, such as $k$-Median or $k$-Means, after removing at most $z$ points from the dataset. Although polynomial-time constant-factor approximation algorithms are known for both robust $k$-Median and robust $k$-Means \cite{Chen08, KrishnaswamyLS18, GuptaMZ21}, recent research has increasingly emphasized scalability. Specifically, there is a strong interest in finding robust $k$-Means algorithms that run in near-linear time in the number of data points $n$.

Existing efficient algorithms for robust $k$-Means can be broadly categorized into two classes:
(1) \emph{Outlier-removal-based methods} \cite{ChawlaG13, GuptaKLMV17, ImQMSZ20, GrunauR22} explicitly identify and discard suspected outliers, and then apply a standard clustering algorithm to the remaining points. These approaches typically have super-linear runtime and, in practice, rely on coreset constructions to scale to large datasets.
(2) \emph{Sampling-based methods} \cite{BhaskaraVX19, DeshpandeKP20, GrunauR22, HuangFH0024} avoid explicit outlier removal and instead use adaptive sampling strategies to select centers while reducing the probability of sampling outliers.

To achieve a favorable trade-off between efficiency and solution quality, algorithms in both categories generally resort to bicriteria approximations. That is, they allow the use of more than $k$ centers, the removal of more than $z$ outliers, or both. For instance, one of the most efficient recent algorithms by \citet{HuangFH0024} runs in $O(\epsilon^{-2} ndk \log\log n)$ time and achieves a $4$-approximation, but uses $O(k/\epsilon)$ centers and discards $(1+\epsilon)z$ outliers. We refer readers to \citet{HuangFH0024} for a comprehensive comparison of runtime and approximation guarantees.

Outlier detection (or anomaly detection) is a closely related but distinct problem, where the objective is to identify anomalous points rather than to optimize a clustering cost. A classic and effective approach in this literature ranks data points by their distance to the $K$-th nearest neighbor \cite{RamaswamyRS00}, which we name as the \emph{KNN heuristic}. Owing to its simplicity and strong empirical performance, this heuristic has become a widely used baseline across many application domains \cite{LiaoV02, ChandolaBK09, ZhangMH10}. Despite the conceptual connection between outlier detection and robust clustering, to the best of our knowledge, there has been no theoretical or systematic empirical study of applying the KNN heuristic as an outlier-removal step for the robust $k$-Means problem. While more sophisticated outlier detection techniques have been considered in \citet{HuangFH0024}, they typically result in substantially worse clustering costs compared to algorithms specifically designed for robust $k$-Means.

\subsection{Our Contributions}

In this paper, we investigate whether the KNN heuristic for outlier removal can achieve rigorous theoretical approximation guarantees and empirical practicality for the robust $k$-Means problem, which, surprisingly, is an overlooked direction in the robust clustering literature. 

\textbf{Theoretical Results.}
We note that several prior works \cite{ImQMSZ20, AlmanzaEPR22, HuangFH0024} adopt an assumption on the input instance that the size of each cluster in the optimal solution is lower bounded by $3z$. This is a well-justified practical assumption, since the number of outliers is typically small compared to the number of inliers, and inliers are often well clustered. The datasets that are commonly used to evaluate robust clustering algorithms also have the property that the true clusters are larger than the number of outliers. Actually, if some points form a very small cluster, it is hard to distinguish whether they are true inliers or a group of adversarial outliers.
Under this assumption, these works show that one can substantially reduce the violation in the number of centers used or the number of outliers removed (the comparison is displayed in~\Cref{tab:theoretical_comparison}). 

We prove that under the same assumption, the KNN heuristic, despite its extreme simplicity, already achieves strong theoretical guarantees. In particular, applying the vanilla KNN heuristic with $K = 2z$ to identify and remove $z$ outliers yields a $9$-approximate reduction from robust $k$-Means to standard $k$-Means. We emphasize that, while the parameter $K$ in the KNN heuristic is traditionally chosen to be a constant, our result requires $K$ to scale linearly with $z$ in order to obtain provable guarantees. Compared to \citet{ImQMSZ20}, whose method also gives a $9$-approximate reduction but requires access to the optimal cost $\OPT$ and removes $2z$ outliers, and to \citet{HuangFH0024}, who achieves a $4$-approximation but must use $O(k)$\footnote{Although not explicitly stated in their paper, the hidden constant in $O(\cdot)$ is very large.} centers, the KNN heuristic \textbf{strictly respects the constraints on both $k$ and $z$}. Although the assumption of knowing $\OPT$ can be removed via geometric guessing, doing so introduces significant overhead in practice.

\begin{table*}[t]
\caption{Comparison with previous theoretical results. Here $|P^*|$ denotes the size of the smallest cluster in the optimal solution.
}
\label{tab:theoretical_comparison}
\centering
\resizebox{\textwidth}{!}{
\begin{tabular}{cccccc}
\toprule
Algorithm & Approximation & Centers Used & Outliers Removed & Assumption & Source\\
\midrule
NKMeans & $9$ (reduction) & $k$ & $(3k+2)z$  & Know $\OPT$ & \citet{ImQMSZ20}\\
NKMeans & $9$ (reduction) & $k$ & $2z$  & $|P^*|\geq 3z$, Know $\OPT$ & \citet{ImQMSZ20}\\
TIKMeans & $4$ & $O(k/\epsilon)$ & $(1+\epsilon)z$  & No & \citet{HuangFH0024}\\
TIKMeans & $4$ & $O(k)$ & $z$  & $|P^*|\geq 3z$ & \citet{HuangFH0024}\\
OKMeans & $9$ (reduction) & $k$ & $z$  & $|P^*|\geq 3z$ & \Cref{thm:main_result}\\
OKMeans2 & $5.98$ (reduction) & $k$ & $z$  & $|P^*|\geq 3z$ & \Cref{thm:main_result2}\\
\bottomrule
\end{tabular}
}
\end{table*}

The assumed lower bound of $3z$ in prior work is somewhat arbitrary, and they do not provide a justification for the specific choice of $3$. Therefore, we extend our analysis and show in \cref{thm:main_result_informal} that, as long as the size of each cluster in the optimal solution is at least $cz$ for some constant $c > 1$, the vanilla KNN heuristic (with $K = \tfrac{c+1}{2}z$) yields a constant-factor reduction, with the approximation ratio improving as $c$ increases. 
\Cref{tab:theoretical_comparison} compares our results with prior work under the same assumption.

\begin{thm} [Informal]
\label{thm:main_result_informal}
     If each cluster in the optimal solution has size at least $cz$ for $c>1$, then using the vanilla KNN heuristic to remove $z$ outliers gives a $\Phi(c)$-approximate reduction to standard $k$-Means, where $\Phi(\cdot)$ is a monotone decreasing function. \footnote{The best approximation ratio we can obtain for each $c$ depends on the solution to a complex equation and has no succinct closed-form representation.}
\end{thm}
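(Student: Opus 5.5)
Let $P$ be the input, $Z^*$ the optimal outlier set ($|Z^*|=z$), $C^*=\{c_1,\dots,c_k\}$ the optimal centers with clusters $P_1^*,\dots,P_k^*$, each of size at least $cz$, and $\OPT=\sum_i\sum_{p\in P_i^*}\|p-c_i\|^2$. Let $K=\tfrac{c+1}{2}z$ and let $r(p)$ denote the distance from $p$ to its $K$-th nearest neighbor. The heuristic removes the set $Z$ of $z$ points with the largest $r(p)$. By ``$\Phi$-approximate reduction'' I mean two facts: for any $k$ centers $C$, $\cost(P\setminus Z,C)\le \Phi(c)\cdot\cost_z(P,C)$ up to the optimum, and, the key one, $\OPT_{k\text{-Means}}(P\setminus Z)\le \Phi(c)\,\OPT$. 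I will prove the latter by showing that the centers $C^*$ themselves cost at most $\Phi(c)\,\OPT$ on $P\setminus Z$. Then any $\alpha$-approximate $k$-Means algorithm run on $P\setminus Z$ gives an $\alpha\Phi(c)$ solution with exactly $k$ centers and $z$ outliers.

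\textbf{Step 1 (inliers that are wrongly removed).} The points of $P\setminus Z$ split into true inliers and true outliers kept, $Z^*\setminus Z$. True inliers cost exactly what they cost in $\OPT$, so the whole difficulty is the kept outliers. Since $|Z|=|Z^*|=z$, each kept outlier $q\in Z^*\setminus Z$ can be matched injectively to a removed inlier $p\in Z\setminus Z^*$, and the ranking gives $r(q)\le r(p)$. I then need two bounds:
\begin{itemize}
\item an upper bound on $r(p)$ for the removed inlier $p\in P_i^*$ in terms of the cost of its cluster;
\item a lower-bound argument showing that $q$ is close to some optimal center once $r(q)$ is small.
\end{itemize}

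\textbf{Step 2 (bounding $q$'s cost).} The ball $B(q,r(q))$ contains $K+1$ points. At most $z$ of them are outliers, so at least $K+1-z\ge\tfrac{c-1}{2}z$ of them are inliers. These inliers $y$ satisfy $\|q-c(y)\|\le r(q)+\|y-c(y)\|$, and averaging over them gives $\|q-C^*\|^2\le 2r(q)^2+\tfrac{2}{m}\sum_y\|y-c(y)\|^2$ with $m=\tfrac{c-1}{2}z$.

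\textbf{Step 3 (bounding $r(p)$ via the cluster).} For $p\in P_i^*$ I use $|P_i^*|\ge cz$. The points of $P_i^*$ within distance $\|p-c_i\|+\rho$ of $p$, where $\rho$ is chosen so that at least $K+1$ cluster points lie within $\rho$ of $c_i$, give $r(p)\le\|p-c_i\|+\rho$. A Markov-type argument shows that at most $cz-K-1\approx\tfrac{c-1}{2}z$ points are farther than $\rho$, so $\rho^2\le \OPT_i/(\tfrac{c-1}{2}z)$.

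\textbf{Step 4 (summing over the matching).} Summing over the $\le z$ matched pairs bounds the total cost of kept outliers by $O(\sum\|p-c_i\|^2)$, which is part of $\OPT$, plus $z\cdot O(\rho^2+\text{averages})=O(\tfrac{1}{c-1})\OPT$. The charging must avoid double-counting: each removed inlier is charged once, and the averaged inlier cost is charged at most $K/m$ times. I would first handle this with a uniform bound using the cluster-wise charge $\OPT_i$, then optimize the weights in the two $(a+b)^2\le(1+\epsilon)a^2+(1+1/\epsilon)b^2$ splits. This optimization yields $\Phi(c)$, decreasing in $c$; for $c=3$, $K=2z$, I expect it to give $9$.

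\textbf{Main obstacle.} The delicate step is Step 4's charging, especially when the balls of many kept outliers overlap the same few inlier neighborhoods, or when removed inliers concentrate in one cluster. I would handle this by charging each kept outlier to the nearest cluster of its $K$-ball and using $|P_i^*|\ge cz$ to guarantee that enough distinct mass exists.
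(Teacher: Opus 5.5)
Your plan is sound and proves the informal statement, but it takes a genuinely different route from the paper. The paper never pairs points. It fixes one threshold $T=\min_{x\in O}r_x$ and extracts two lower bounds on $\OPT$, each from a single extreme point:
\begin{itemize}
\item If the removed inlier nearest to $C^*$ is at distance $\lambda T$, its open $T$-ball has at most $K$ points. So at least $\tfrac{c-1}{2}z$ of its cluster-mates lie at distance at least $(1-\lambda)T$ from the center.
\item If the kept outlier farthest from $C^*$ is at distance $\delta T$, its $T$-ball holds at least $\tfrac{c-1}{2}z$ inliers, each at distance at least $(\delta-1)T$ from $C^*$.
\end{itemize}
A three-case analysis in $(\lambda,\delta)$ with a tuned cutoff $\delta^*$ then yields the quartic that defines $\Phi$. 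You instead upper-bound each kept outlier's cost individually, through its matched removed inlier, the Markov radius $\rho_i$ of that inlier's cluster, and the inliers in the outlier's own $K$-ball. The paper's extreme-point bounds need no matching or summation, and they carry over directly to OKMeans2 via Jensen over the neighbor range. Your route, carried out properly, gives a closed form and a better constant for larger $c$.

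Several pieces need repair to make your argument work cleanly.
\begin{itemize}
\item The ``main obstacle'' is illusory. Your claim that each inlier is charged at most $K/m$ times is unjustified, since one inlier can lie in the $K$-balls of all $z$ kept outliers, and you do not need it. Let $\sigma_q^2$ be the average of $d(y,C^*)^2$ over $m=\frac{c-1}{2}z$ inliers $y$ in $q$'s ball. Then $\sum_q\sigma_q^2\le\frac{z}{m}\OPT=\frac{2}{c-1}\OPT$, and $\sum_p\rho_{i(p)}^2=\sum_i|Z\cap P_i^*|\,\rho_i^2\le\frac{2}{c-1}\OPT$ because at most $z$ inliers are removed.
\item To reach $9$ rather than $10$ at $c=3$, use the identity $\cost(P\setminus Z,C^*)=\OPT-\sum_p d(p,C^*)^2+\sum_q d(q,C^*)^2$ instead of adding the kept outliers on top of $\OPT$.
\item Replace the factor-$2$ split by $d(q,C^*)\le r(q)+\min_y d(y,C^*)\le d(p,C^*)+\rho_{i(p)}+\sigma_q$.
\end{itemize}
Cauchy--Schwarz and Minkowski then give $\cost(P\setminus Z,C^*)\le\bigl(1+2\sqrt{2/(c-1)}\bigr)^2\OPT$. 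This bound compares with the paper's $\Phi$ as follows:
\begin{itemize}
\item It equals $9$ at $c=3$.
\item It is slightly weaker for smaller $c$: about $14.66$ vs.\ $14.30$ at $c=2$.
\item It is stronger for larger $c$: about $5.83$ vs.\ $5.98$ at $c=5$, and $3.77$ vs.\ $3.99$ at $c=10$.
\item Asymptotically it behaves like $1+O(c^{-1/2})$, versus the paper's $1+\Theta(c^{-1/3})$.
\end{itemize}
It is not the $\Phi$ of the formal theorem, but the informal statement only asks for some decreasing $\Phi$.

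Finally, drop your first ``fact'' about arbitrary $C$. The reduction needs only $f(P\setminus Z)\le\Phi(c)\OPT$, together with $f_z(P,C)\le\cost(P\setminus Z,C)$ for the returned $C$.
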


Inspired by the proof of~\Cref{thm:main_result_informal}, we propose a slight variant of the vanilla KNN heuristic that aggregates distances to multiple neighbors, rather than relying on a single neighbor, as the outlier score. This variant can be analyzed using a similar proof strategy and yields a better approximation guarantee in \cref{thm:main_result2_informal}, further demonstrating the versatility and potential of KNN-based outlier detection heuristics for the robust clustering problem. When each optimal cluster has size at least $3z$, the approximation ratio \textbf{improves from $9$ to $5.98$}.

\begin{thm} [Informal]
\label{thm:main_result2_informal}
     If each cluster in the optimal solution has size at least $cz$ for $c>1$, then using a variant of the KNN heuristic to remove $z$ outliers gives a $\Psi(c)$-approximate reduction to standard $k$-Means, where $\Psi(\cdot )$ is a monotone decreasing function.
\end{thm}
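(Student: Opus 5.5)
Throughout, write $C_1,\dots,C_k$ for the optimal inlier clusters with centers $\mu_1,\dots,\mu_k$, write $Z^*$ ($|Z^*|=z$) for the optimal outliers, and write $\OPT=\sum_i\sum_{x\in C_i}\|x-\mu_i\|^2$. I take the variant to use the score $s(p)=\frac1K\sum_{j=1}^{K}\|p-\NN_j(p)\|^2$, the average squared distance from $p$ to its $K$ nearest neighbours, with $K$ a parameter in $(z,cz]$ to be optimized. The algorithm removes the set $Z$ of the $z$ points with the largest scores. It then suffices to show that $\OPT_{k\text{-Means}}(P\setminus Z)\le\Psi(c)\OPT$; composing with any $\alpha$-approximate $k$-Means algorithm then gives the reduction. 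To bound $\OPT_{k\text{-Means}}(P\setminus Z)$, I evaluate the fixed centers $\mu_1,\dots,\mu_k$ on $P\setminus Z$. Inliers that remain pay at most their optimal cost, so the only work is to bound the cost of the kept outliers $Q=Z^*\setminus Z$.

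\textbf{Step 1 (matching).} Since $|Z|=|Z^*|=z$, we have $|Q|=|Z\setminus Z^*|$. Let $R=Z\setminus Z^*$ be the removed inliers. Fix a bijection $\pi:Q\to R$; by the selection rule, $s(q)\le s(\pi(q))$ for every $q\in Q$.

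\textbf{Step 2 (upper bound for removed inliers).} Let $p\in C_i$. Since $|C_i|\ge cz\ge K$, the $K$ points of $C_i$ closest to $\mu_i$ are candidates for the neighbours of $p$. By $\|a-b\|^2\le(1+\beta)\|a-\mu\|^2+(1+1/\beta)\|b-\mu\|^2$ (a Young-type inequality, used instead of the factor $2$ so that constants can be tuned),
\[
s(p)\le(1+\beta)\|p-\mu_i\|^2+(1+1/\beta)\,\frac{\OPT_i}{|C_i|},
\]
because the mean of the $K$ smallest values is at most the mean over the whole cluster. Summing over $R$ gives $\sum_{p\in R}s(p)\le(1+\beta)\OPT+(1+1/\beta)\sum_i|R\cap C_i|\,\OPT_i/|C_i|$. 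Since $|R\cap C_i|\le z\le |C_i|/c$, this is at most $\bigl(1+\beta+(1+1/\beta)/c\bigr)\OPT$.

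\textbf{Step 3 (lower bound for kept outliers, via charging).} A kept outlier $q$ has at most $z-1$ outlier neighbours, so at least $m=K-z+1$ of its $K$ nearest neighbours are inliers; call this set $N_I(q)$. Then $\sum_{x\in N_I(q)}\|q-x\|^2\le K s(q)$. For each $x\in N_I(q)$ with optimal center $\mu(x)$,
\[
\|q-\mu(x)\|^2\le(1+\gamma)\|q-x\|^2+(1+1/\gamma)\|x-\mu(x)\|^2 .
\]
Rather than picking a single $x$, I average this bound uniformly over $N_I(q)$, so that the cost of $q$ is at most $\frac{1+\gamma}{m}K s(q)+\frac{1+1/\gamma}{m}\sum_{x\in N_I(q)}\|x-\mu(x)\|^2$. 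Each inlier $x$ lies in $N_I(q)$ for at most $|Q|\le z$ outliers $q$, so summing the second term over $q$ gives at most $(1+1/\gamma)\frac{z}{m}\OPT$. The first term sums to at most $\frac{(1+\gamma)K}{m}\sum_{q\in Q}s(\pi(q))$, which Step 2 bounds.

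\textbf{Step 4 (assemble and optimize).} Combining the steps,
\[
\OPT_{k\text{-Means}}(P\setminus Z)\le\Bigl[1+\tfrac{(1+\gamma)K}{K-z+1}\bigl(1+\beta+\tfrac{1+1/\beta}{c}\bigr)+(1+1/\gamma)\tfrac{z}{K-z+1}\Bigr]\OPT .
\]
Writing $K=\kappa z$ with $1<\kappa\le c$, I minimize the bracket over $\kappa,\beta,\gamma$ and define $\Psi(c)$ to be the minimum. $\Psi$ is monotone decreasing because the feasible region for $\kappa$ grows with $c$ and the bracket is decreasing in $c$ pointwise.

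The main obstacle is the charging in Step 3. The sketch above loses constants twice: once through $K/m$ and once in the count of how often each inlier is charged. To reach $5.98$ at $c=3$, I would refine the charging by crediting only removed-inlier mass that is genuinely used, by exploiting that $|R|=|Q|$ is usually much smaller than $z$, and possibly by weighting neighbours non-uniformly. The resulting optimization over $(\kappa,\beta,\gamma)$ has no closed form, which matches the footnote to \Cref{thm:main_result_informal}.
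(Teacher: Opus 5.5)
Your argument is correct and proves the informal statement, but it uses a different scoring rule and a different route than the paper.

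\textbf{The paper's route.} OKMeans2 scores a point by the unsquared sum $\sum_{i=z+1}^{cz} d(x,\NN_X(x,i))$. It deliberately skips the nearest $z$ neighbours so that clustered outliers cannot lower each other's score. The analysis then fixes a score threshold $T$ and lower-bounds $f_z(X,C^*)$ in two ways:
\begin{itemize}
\item by $|Y|T^2\max\{\lambda^2,(c-1)(1-\lambda)^2\}$, through the \emph{closest} removed inlier and Jensen over its $(c-1)z$ cluster-mates;
\item by $|Y|T^2(c-1)(\delta-1)^2$, through the \emph{farthest} kept outlier.
\end{itemize}
It then credits $-f(Y,C^*)$ and runs a three-case analysis with an optimized split $\delta^*$. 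That case split is where the quartic defining $\Psi$ comes from.

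\textbf{Your route.} You score by the average squared distance to the first $K$ neighbours. You absorb the outliers' mutual ``help'' into the factor $K/(K-z+1)$. In place of the extremal/threshold analysis you use a global matching-and-charging argument that is linear in squared distances, so no case split is needed.

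\textbf{What each route buys.}
\begin{itemize}
\item Yours is shorter and gives a closed form, so your remark that the optimization has none is wrong. Taking $\kappa=c$, $\beta=1/\sqrt c$ and the optimal $\gamma$ gives $\Psi(c)=1+(\sqrt c+2)^2/(c-1)$. This is about $7.96$ at $c=3$ and is strictly decreasing.
\item You can also drop the leading $1$ by keeping the credit in $f(P\setminus Z,C^*)=\OPT-f(R,C^*)+f(Q,C^*)$. After Step~2 the coefficient of $f(R,C^*)$ is still nonnegative, so bounding $f(R,C^*)\le\OPT$ then gives about $6.96$ at $c=3$.
\item The paper's extremal analysis tracks the distances of removed inliers and kept outliers exactly through $\lambda$ and $\delta$. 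It does not pay your two Young-inequality losses, and it yields the sharper $\Psi(3)\approx 5.98$ for its specific algorithm.
\end{itemize}

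So you establish the informal theorem, but not \Cref{thm:main_result2}, which concerns OKMeans2 and its specific $\Psi$. The refinements in your last paragraph aimed at reaching $5.98$ are speculative.

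A minor off-by-one: if $p$ does not count as its own neighbour, Step~2 needs $|C_i|\ge K+1$ and loses a factor $(K+1)/K$. This is harmless.
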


\textbf{Empirical Results.} 
We evaluate our proposed \cref{alg:OKMeans,alg:OKMeans2} on four real-world datasets ranging from small to large scale (up to 5 million points), comparing them against six baselines. Our experiments demonstrate that: (1) \textbf{Clustering Quality}: 
Our methods achieve competitive clustering costs and often match or outperform strong robust clustering baselines, especially on large-scale datasets such as SUSY and KDDFULL; notably, on SUSY-10, OKMeans improves upon the previous SOTA method, TIKMeans. 
(2) \textbf{Scalability}: By leveraging coresets and efficient nearest-neighbor search implementations, our approach scales effectively to millions of data points without sacrificing solution quality, validating that simple KNN heuristics can outperform complex sampling-based algorithms in practice. (3) \textbf{Parameter Sensitivity}: Empirically, the performance of our methods is not particularly sensitive to the choice of the parameter $c$. However, on some datasets, choosing $K$ as a small fixed constant in the KNN heuristic can lead to noticeably worse performance. 
Although choosing $K$ as a small constant is common practice when applying KNN to outlier detection, our result suggests that this choice may be suboptimal for the robust clustering objective.

\section{Related Work}

\subsection{(Robust) Clustering}
Two of the most classical clustering objectives are $k$-Median and $k$-Means. Both problems are NP-hard, but there exist constant-factor approximation algorithms for general metrics \cite{CharikarGTS99, KanungoMNPSW04, GowdaPST23, Cohen-Addad0LSS25, CharikarCGGLW25}, and improved approximation ratios \cite{AhmadianNSW17, Cohen-AddadEMN22} and even polynomial-time approximation schemes (PTAS) \cite{AroraRR98, Matousek00} in Euclidean space.
The class of robust clustering problems was first proposed by \citet{CharikarKMN01}. For robust $k$-Median, they gave a $4(1+1/\gamma)$-approximation algorithm when the algorithm is allowed to remove $(1+\gamma)z$ outliers. In subsequent work, the only known polynomial-time, non-bicriteria, constant-factor approximation algorithms \cite{Chen08, KrishnaswamyLS18, GuptaMZ21} are based on linear programming relaxations and iterative rounding, but their runtimes are high-order polynomials in $n$. The best approximation ratios known so far are a $(6.994+\epsilon)$-approximation for robust $k$-Median and a $(53.002+\epsilon)$-approximation for robust $k$-Means, both with running time $n^{O(\mathrm{poly}(1/\epsilon))}$.
Another recent line of research direction studies fixed-parameter tractable (FPT) approximation algorithms \cite{FriggstadKRS19, FengZHXW19, AgrawalISX23, ChenHXXZ23, Jaiswal023}, where $k$, $z$, or both are treated as parameters. However, the running times of these algorithms remain impractical even for moderate values of $k$ and $z$.

\subsection{Outlier Detection and KNN}

Nearest-neighbor--based methods form a classical paradigm for outlier detection. The idea of identifying outliers based on neighborhood distances first appeared in \citet{KnorrN98}, and the KNN heuristic based on a global ranking was later formalized by \citet{RamaswamyRS00}. Subsequently, numerous variants of the original KNN heuristic have been proposed \cite{EskinAPPS02, ZhangW06, OteyGP06, WeiQZJY03, KouLC06}, which either make more extensive use of $K$-nearest-neighbor information or extend KNN beyond purely numerical attributes.
There has also been substantial effort devoted to optimizing the runtime of computing or approximating KNN distances \cite{EskinAPPS02, GhotingPO08, TaoXZ06, WuJ06, MalkovY20, JohnsonDJ21, jayaram2019diskann, WangXY021, KrishnaswamyMS24}. More recently, KNN-based retrieval has been successfully incorporated into retrieval-augmented generation (RAG) systems \cite{GuoSLGSCK20, LewisPPPKGKLYR020, BorgeaudMHCRM0L22} to enhance the performance of large language models.

\section{Preliminaries}
\label{sec:prelim}

\subsection{Problem Definition}

\paragraph{Standard $k$-Clustering.}
Let $[n]:= \{1,2,3,\dots,n\}$ for $n \in \mathbb{N}^+$. 
In the standard $k$-clustering problem, the input consists of a set $X$ of $n$ points in a metric space and an integer parameter $k\in[n]$. The goal is to compute a set $C$ of $k$ centers, then each input point is assigned to its closest center. The quality of a clustering solution is measured by an objective function that depends on the distances between points and their assigned centers.
Let $d(\cdot,\cdot)$ denote the distance function of the metric space, and define $d(x,C)=\min_{c\in C} d(x,c)$. The three most common $k$-clustering objectives are:
(1) \textbf{$k$-Median:} $\cost(X,C)=\sum_{x\in X} d(x,C)$; 
(2) \textbf{$k$-Means:} $\cost(X,C)=\sum_{x\in X} d(x,C)^2$; 
(3) \textbf{$k$-Center:} $\cost(X,C)=\max_{x\in X} d(x,C)$.

In this paper, we focus on the $k$-Means objective. We will briefly discuss how our algorithms can be extended to $k$-Median and $k$-Center in Appendix~\ref{app:extension}. Note that there are different variants of the $k$-clustering problem regarding what points can be chosen as centers. The most common setting allows any point in the metric space to be chosen as a center, while there are also variants that require $C$ to be a subset of $X$, or a subset of a given set of candidate centers. The main theoretical result in this paper is indifferent among these settings.

\paragraph{Clustering with Outliers.}
In the robust $k$-Means problem, an instance $I=(X,k,z)$ consists of a dataset $X$ of $n$ points in a metric space and parameters $k\in[n]$ and $z\in[n]$. The goal is to select a subset $Z\subseteq X$ of $z$ outliers such that the $k$-Means cost of the remaining points is minimized, formally
$
\min_{Z\subseteq X,\ |Z|=z} \min_{C: |C|=k}\sum_{x\in X\setminus Z} d(x,C)^2 .
$
Equivalently, the problem can be viewed as selecting a set $C$ of $k$ centers, such that after removing the $z$ points in $X$ with the largest $d(x,C)$ as outliers, the $k$-Means cost of the remaining points is minimized.

\subsection{Notations}

\paragraph{Robust \texorpdfstring{$k$}{k}-Means Objective.}
We focus on the robust $k$-Means problem. For a dataset $X$, let $f(X) := \min_{C:|C|=k} f(X,C)$ denote the optimal $k$-Means cost over all choices of $k$ centers, where $f(X,C)$ is the $k$-Means cost induced by a fixed center set $C$ on $X$.  
Given an outlier budget $z$, we similarly define $f_z(X) := \min_{C:|C|=k} f_z(X,C)$ as the optimal robust $k$-Means cost, where $f_z(X,C) := \min_{Z\subseteq X,|Z|=z} \sum_{x\in X\setminus Z}d(x,C)^2$ denotes the robust $k$-Means cost achieved by a fixed set of centers $C$ on $X$. We use $g$ in place of $f$ when referring to analogous definitions for the $k$-Median objective.

\paragraph{Approximation Guarantee.}
An algorithm is said to achieve an $\alpha$-approximation if, for any instance $I=(X,k,z)$, it outputs a set $C$ of $k$ centers such that
$
f_z(X,C) \le \alpha \cdot f_z(X).
$
An algorithm is said to provide an $\alpha$-approximate reduction from robust $k$-Means to standard $k$-Means, if for any instance $I=(X,k,z)$, outputs a set of points $X'\subseteq X$ such that $|X'|=|X|-z$ and $f(X') \leq \alpha\cdot f_z(X)$.

\paragraph{Other Useful Notations.}

Let $X$ be a dataset and let $C$ be a set of $k$ centers. Given a prescribed number of outliers $z$, let $Z(C)$ denote the set of outliers induced by $C$, which consists of the $z$ points in $X$ with the largest distances to their closest centers in $C$. The corresponding set of inliers is $X_z(C) := X \setminus Z(C)$.
For any point $x$ and radius $r>0$, let $B_X(x,r) := \{y \in X \mid d(x,y) \leq r\}$ denote the set of points in $X$ that are within distance $r$ from $x$, and let $B_X^\circ(x,r) = \{y\in X: d(x,y)<r\}$. Let $\NN_X(x,K)$ denote the $K$-th nearest-neighbor of $x$ among $X$. If multiple neighbors are at the same distance to $x$, the ranking breaks ties arbitrarily.
For any inlier $x \in X_z(C)$, we define its closest center as $\closest(x,C) := \arg\min_{c \in C} d(x,c)$, breaking ties arbitrarily. The (robust) cluster of $x$ induced by $C$ is then defined as $\cluster_z(x,C) := \{y \in X_z(C) \mid \closest(y,C) = \closest(x,C)\}$.

\section{Main Results}
\label{sec:algo}


In this section, we present our main theoretical contribution: the classic and simple idea of identifying outliers based on nearest-neighbor distance gives a constant-factor reduction from robust $k$-Means to standard $k$-Means, under the practical assumption that the optimal clusters are large enough. We first analyze the vanilla version of the KNN heuristic, then propose and analyze a slight variant inspired by the first analysis, which gives a better approximation ratio.

\subsection{Analysis of Vanilla KNN}

\begin{thm} [Restatement of~\Cref{thm:main_result_informal}]
\label{thm:main_result}
    Let $I=(X,k,z)$ be an instance of robust $k$-Means and $C^*$ be the optimal set of centers. Suppose $\mathcal{A}$ is a $\beta$-approximation algorithm for $k$-Means, and each cluster induced by $C^*$ has size at least $cz$, then there is an algorithm that calls $\mathcal{A}$ once and computes a set of centers $C$ such that $f_{z}(X,C) \leq \Phi(c)\beta\cdot f_z(X,C^*)$, 
    where $\Phi(c)=\frac{(1+\sqrt{(c-1)/2})^2}{(c-1)/2}(x^*)^2$ and $x^*$ is the unique real root greater than $1$ of the following equation:
    \[
    \left[\left(1 + \sqrt{\frac{c-1}{2}}\right)^2x^2 - \frac{c-1}{2}\right](x-1)^2 - 2x+1=0 ~.
    \]
\end{thm}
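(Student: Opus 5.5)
The algorithm is the natural one. Set $K=\frac{c+1}{2}z$ and compute for each $x\in X$ the score $r(x)=d(x,\NN_X(x,K))$. Let $Z$ be the $z$ points with the largest scores and $X'=X\setminus Z$. Run $\mathcal{A}$ on $X'$ to get $C$. Since $f_z(X,C)\le f(X',C)\le\beta f(X')$, it suffices to show $f(X')\le \Phi(c)\,f_z(X,C^*)$.

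To bound $f(X')$ we use $C^*$ itself as the candidate center set and show
\[
\sum_{x\in X'} d(x,C^*)^2\le \Phi(c)\sum_{y\in X_z(C^*)} d(y,C^*)^2 .
\]
Write $Z^*=Z(C^*)$ for the optimal outliers. Points in $X'\cap X_z(C^*)$ contribute exactly their optimal cost. The work is to charge each $x\in X'\cap Z^*$ to optimal inliers. Since $|Z|=|Z^*|=z$, the sets $Z\cap X_z(C^*)$ and $X'\cap Z^*$ have equal size, say $m$. Each inlier-outlier $x\in X'\cap Z^*$ has score $r(x)\le r(w)$ for every removed true inlier $w\in Z\cap X_z(C^*)$. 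We pair them via a bijection, or charge $x$ to the removed inlier with smallest score.

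The key step is to bound $r(w)$ for an optimal inlier $w$ in a cluster $P$ with center $c^*$ and $|P|\ge cz$. Let $\rho=d(w,c^*)$. Among the points of $P$, at most about $\frac{c-1}{2}z$ can be "far" from $c^*$ at distance greater than some threshold $t$. Choose $t$ so that $\frac{c-1}{2}z\,t^2\le \cost(P)$, i.e.\ a Markov-type bound, which leaves at least $|P|-\frac{c-1}{2}z\ge K$ points within $t$ of $c^*$. Then $r(w)\le \rho+t$. Dually, for the surviving true outlier $x$ with $r(x)\le r(w)$, the ball $B_X(x,r(x))$ contains at least $K=\frac{c+1}{2}z$ points. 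At most $z$ of these are outliers, so at least $\frac{c-1}{2}z$ are optimal inliers $y$. By the triangle inequality, $d(x,C^*)\le r(x)+d(y,C^*)$, and averaging over these $y$ gives
\[
d(x,C^*)^2\lesssim\Big(r(x)+\sqrt{\cost/(\tfrac{c-1}{2}z)}\Big)^2 .
\]
This is where the factor $\frac{(1+\sqrt{(c-1)/2})^2}{(c-1)/2}$ should come from: combining the $\sqrt{(c-1)/2}$-scaled term with the unit term via $(a+b)^2$ optimized over the split.

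The main obstacle is making the charging global rather than per-cluster. The inliers used to certify the neighborhoods of different outliers, as well as the removed inliers $w$, may overlap or coincide with points whose costs we already count. A naive argument loses a factor depending on $m$ or $k$. My plan is to parametrize: let $x$ denote the ratio between the threshold radius and the typical inlier distance scale, and track two competing budgets. One is the cost of removed inliers, which we gain by discarding. The other is the cost of retained outliers, bounded via $(1+\sqrt{(c-1)/2})^2x^2$ against $\frac{c-1}{2}$ times optimal. Balancing these with an AM-GM/Cauchy–Schwarz step yields a condition of the form $[(1+\sqrt{(c-1)/2})^2x^2-\frac{c-1}{2}](x-1)^2=2x-1$. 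Its root $x^*>1$ gives $\Phi(c)$. Uniqueness of that root, and monotonicity in $c$, I would verify by noting that the left side minus $2x-1$ is negative just above $x=1$ and increases to $+\infty$. Getting the exact constants to match this equation is the step I expect to require the most care.
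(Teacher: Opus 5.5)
Your set-up is right: the algorithm, the reduction $f_z(X,C)\le f(X\setminus O,C)\le\beta f(X\setminus O)\le\beta f(X\setminus O,C^*)$, and your two per-point inequalities. With $a^2=\frac{c-1}{2}$ and $s^2=f_z(X,C^*)/(a^2z)$ they read $r(w)\le d(w,C^*)+s$ for true inliers $w$ and $d(x,C^*)\le r(x)+s$ for true outliers $x$. (Your Markov step needs $\frac{c-1}{2}zt^2\ge\cost(P)$, not $\le$.) These are the contrapositive forms of the paper's two lower-bound lemmas with $\alpha=\frac{c+1}{2}$.

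What is missing is the step that turns them into $\Phi(c)$. The quartic is asserted (``balancing \dots\ yields a condition of the form'') rather than derived. Also, the factor $\frac{(1+a)^2}{a^2}$ does not come from an ``$(a+b)^2$ split''; in the paper it is $1/(\lambda^*)^2$. The paper fixes one threshold $T=\min_{x\in O}r_x$ and sets $\lambda T=\min_{y\in Y}d(y,C^*)$ and $\delta T=\max_{o\in Z'}d(o,C^*)$. It then combines $f(X\setminus O,C^*)\le f_z(X,C^*)+|Y|T^2(\delta^2-\lambda^2)$ with $f_z(X,C^*)\ge|Y|T^2\max\{\lambda^2,a^2(1-\lambda)^2\}$ (for $\lambda\le1$) and $f_z(X,C^*)\ge|Y|T^2a^2(\delta-1)^2$. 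Splitting at $\delta^*$:
\begin{itemize}
\item the case $\delta\le\delta^*$ is worst at $\lambda^*=\frac{a}{1+a}$ and gives $U_1=\frac{(1+a)^2}{a^2}(\delta^*)^2$;
\item the case $\delta>\delta^*$ gives $U_2=1+\frac{2\delta^*-1}{a^2(\delta^*-1)^2}$;
\item the quartic is $U_1(\delta^*)=U_2(\delta^*)$ with denominators cleared.
\end{itemize}
The ``global charging'' obstacle you anticipate is not real. Both certificates are measured against the single number $f_z(X,C^*)$ and use $\Theta(z)\ge|Y|$ witnesses, so one witness on each side pays for all $|Y|$ mistakes. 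No overlap bookkeeping is needed, and nothing is lost in $m$ or $k$.

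In fact your own inequalities close at once. For $x\in Z'$ and $w\in Y$, $d(x,C^*)\le r(x)+s\le r(w)+s\le d(w,C^*)+2s$. Pair $Z'$ with $Y$ and apply Cauchy--Schwarz with $|Y|\le z$ and $\sum_{w\in Y}d(w,C^*)^2\le f_z(X,C^*)$; this gives $f(X\setminus O,C^*)\le(1+2/a)^2f_z(X,C^*)=\bigl(1+\sqrt{8/(c-1)}\bigr)^2f_z(X,C^*)$.

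Now compare with $\Phi(c)$. We have $(1+2/a)^2=U_1\bigl(1+\frac{1}{1+a}\bigr)$. The left side of the quartic equals $a^2(x-1)^2\bigl(U_1(x)-U_2(x)\bigr)$, with $U_1-U_2$ strictly increasing on $(1,\infty)$; this, not your sign check, is what makes $x^*$ unique. Its value at $x=1+\frac{1}{1+a}$ is $\frac{1-a}{1+a}$. Hence your bound is at most $\Phi(c)$ exactly when $c\ge3$, with equality $9$ at $c=3$. So for $c\ge3$ you would have a complete, and slightly stronger, proof.

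For $1<c<3$ your bound exceeds $\Phi(c)$ (about $14.66$ versus $14.30$ at $c=2$), and no rebalancing of these two inequalities can help. The quantity $(1+2/a)^2$ is exactly the supremum of $1+\frac{\delta^2-\lambda^2}{\max\{\lambda^2,a^2(1-\lambda)^2,a^2(\delta-1)^2\}}$ over $\lambda\in[0,1]$, $\delta\ge1$. It is attained at $\lambda=\frac{a}{1+a}$, $\delta=1+\frac{1}{1+a}$, where all three lower bounds coincide. For $c<3$ this point has $\delta>\delta^*$ and $\lambda<\delta-1$, so it lies in the paper's Case 3. There the paper's remark that Case 3 is dominated by Case 2 also fails.

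The stated $\Phi(c)$ for $c<3$ therefore needs a strictly stronger lower bound on $f_z(X,C^*)$ than the two you (and the paper) use. As written, your proposal completes no case. Finished along your own lines, it proves the statement only for $c\ge3$.
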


Here we only present a proof for the special case of $c=3$ (where $\Phi(3)=9$), which aligns with the assumptions from prior work \cite{ImQMSZ20,HuangFH0024} and is numerically simpler. The proof for general $c$ follows the same structure but requires careful tuning of the intermediate parameters used in the analysis and is deferred to~\Cref{app:proof_main_result}.
The formal algorithm is presented in~\Cref{alg:OKMeans}. In the special case of $c=3$, for each point $x$, we simply compute the distance to its $2z$-th nearest-neighbor, and remove the $z$ points with largest $2z$-NN distances.


\begin{algorithm}[t]
\caption{Two KNN-based outlier detection algorithms for robust $k$-means. 
$X$ is the input point set, $k$ is the number of clusters, $z$ is the outlier budget,
$A(\cdot,k)$ is a standard $k$-means algorithm, and $c>1$ is the constant in the cluster size assumption.}
\label{alg:okmeans_both}

\subcaptionbox{\textbf{OKMeans}$(X,k,z)$\label{alg:OKMeans}}[0.49\linewidth]{
\begin{minipage}[t]{\linewidth}
\begin{algorithmic}[1]
\STATE \textbf{for each} $x\in X$ \textbf{do}
\STATE \quad $r_x \gets d(x, \NN_X(x,\frac{c+1}{2}z))$ 
\footnotemark
\STATE \tcp{radius to the $2z$-th neighbor}
\STATE \textbf{end for}
\STATE Let $O\subseteq X$ be the $z$ points with largest $r_x$
\STATE \textbf{return} $A(X\setminus O,k)$
\end{algorithmic}
\end{minipage}
}
\hfill
\subcaptionbox{\textbf{OKMeans2}$(X,k,z)$\label{alg:OKMeans2}}[0.49\linewidth]{
\begin{minipage}[t]{\linewidth}
\begin{algorithmic}[1]
\STATE \textbf{for each} $x\in X$ \textbf{do}
\STATE \quad $s_x \gets \sum_{i=z+1}^{cz} d\!\left(x,\NN_X(x,i)\right)$ 
\STATE \tcp{aggregate distance to mid-range neighbors}
\STATE \textbf{end for}
\STATE Let $O\subseteq X$ be the $z$ points with largest $s_x$
\STATE \textbf{return} $A(X\setminus O,k)$
\end{algorithmic}
\end{minipage}
}

\end{algorithm}
\footnotetext{We assume without loss of generality that $\frac{c+1}{2}z$ is an integer. Our implementation uses $\left\lfloor \frac{c+1}{2}z \right\rfloor$, which has a negligible impact on the analysis.}

We fix an optimal set of centers $C^*$, and call the points in $X_z(C^*)$ ``true inliers'' and the points in $X\setminus X_z(C^*)$ ``true outliers''. Define $Y=O\cap X_z(C^*)$ as the set of true inliers removed by the algorithm by mistake, and $Z'=(X\setminus O) \cap (X\setminus X_z(C^*))$ as the set of true outliers included by the algorithm by mistake, then we have $|Y|=|Z'|$. Below we assume that $|Y|\geq 1$, otherwise we have already identified all outliers correctly and the reduction is costless. Let $T = \min_{x\in O} \{r_x\}$, i.e., the threshold separating the $r_x$ of $X\setminus O$ and $O$. We start by proving two lower bounds on the cost of the optimal solution.


\begin{lem}
\label{lem:opt_lowerbound1}
    Suppose each optimal cluster has size at least $3z$, and $\min_{y\in Y}d(y,C^*) = \lambda T$ for some $\lambda\geq 0$, then $f_z(X,C^*) \geq |Y|T^2\cdot \max\{\lambda^2, (1-\lambda)^2\}$.
\end{lem}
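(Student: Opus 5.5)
Two separate lower bounds are needed, one for each term of the maximum; together they give the claim.

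\emph{The $\lambda^2$ bound.} Every $y\in Y$ is a true inlier, so its cost $d(y,C^*)^2$ is counted in $f_z(X,C^*)$. By the definition of $\lambda$, each $y\in Y$ has $d(y,C^*)\ge\lambda T$. Summing over $Y$ gives
\[
f_z(X,C^*)\ge |Y|\lambda^2T^2 .
\]
This bound is only useful when $\lambda\ge 1/2$; for small $\lambda$ we need the second one.

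\emph{The $(1-\lambda)^2$ bound, $\lambda<1$.} Fix $y_0\in Y$ with $d(y_0,C^*)=\lambda T$, and let $c_0=\closest(y_0,C^*)$. Let $P$ be the optimal cluster of $y_0$, so $|P|\ge 3z$. Since $y_0\in O$, we have $r_{y_0}\ge T$, where $r_{y_0}$ is the distance to the $2z$-th nearest neighbour.

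\begin{itemize}
\item Hence the open ball $B^\circ_X(y_0,T)$ contains fewer than $2z$ points, counting $y_0$ itself. Allowing for the tie-breaking convention, at most $2z$ points of $X$ lie at distance $<T$ from $y_0$.
\item So at least $|P|-2z\ge z$ points $p\in P$ satisfy $d(p,y_0)\ge T$.
\item For each such $p$, the triangle inequality gives
\[
d(p,C^*)=d(p,c_0)\ge d(p,y_0)-d(y_0,c_0)\ge T-\lambda T=(1-\lambda)T .
\]
\item Every $p\in P$ is a true inlier, so $f_z(X,C^*)\ge z(1-\lambda)^2T^2$.
\item Since $|Y|\le |O|=z$, this is at least $|Y|(1-\lambda)^2T^2$.
\end{itemize}
If $\lambda\ge1$, the $(1-\lambda)^2$ bound is not needed: then $\lambda^2\ge(1-\lambda)^2$, so the first bound already gives the maximum.

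\emph{Main obstacle.} The delicate step is the counting. We must make sure that, with the $K$-th neighbour convention (does $y_0$ count as its own first neighbour?) and arbitrary tie-breaking, at most $2z$ points lie strictly inside radius $T$. This leaves $\ge z$ points of $P$ far from $y_0$, which is exactly what $|P|\ge 3z$ is designed to provide.

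If the off-by-one went the wrong way, I would fall back on $|P|-2z\ge z-1$ and still compare with $|Y|$. A cleaner choice is to instead use the $\ge |P|-2z+1$ count that results when $y_0$ counts itself as its first neighbour. I expect the intended convention to give exactly $\ge z\ge|Y|$ far points.
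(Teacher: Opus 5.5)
Your proof is correct and takes essentially the same route as the paper: the $\lambda^2$ term comes from summing $d(y,C^*)^2$ over $Y$, and for $\lambda<1$ the $(1-\lambda)^2$ term comes from the at least $z$ points of $y_0$'s optimal cluster outside $B^\circ_X(y_0,T)$, combined with the triangle inequality and $|Y|\le z$. Your off-by-one fallback is never needed, because $r_{y_0}\ge T$ gives $|B^\circ_X(y_0,T)|\le 2z$ whether or not $y_0$ counts as its own first neighbour, and this is exactly the count the paper uses.
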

\begin{proof}
    Since every $y\in Y$ satisfies $d(y,C^*) \geq \lambda T$, we have
    \[
    f_z(X,C^*) \geq f_z(Y,C^*) = \sum_{y\in Y}d(y,C^*)^2 \geq |Y|T^2\cdot \lambda^2 ~.
    \]
    If $\lambda \geq 1$ then $\lambda^2 \geq (1-\lambda)^2$ and we are done. Otherwise, fix a $y\in Y$ such that $d(y,C^*) =\lambda T$ for some $\lambda < 1$. Since $y\in O$, we have $r_y \geq T$, which implies $|B_X^\circ(y,T)| \leq 2z$. By the assumption that each optimal cluster has size at least $3z$, at least $z$ points in $\cluster_z(y,C^*)$ are not in $B_X^\circ(y,T)$, and for each such point $x$, by the triangle inequality, we have
    \[
    \begin{aligned}
    d(x,\closest(y,C^*)) \geq &~ d(x,y) - d(y,\closest(y,C^*))\\
    \geq &~ T-\lambda T = T(1-\lambda) ~,
    \end{aligned}
    \]
    therefore,
    \[
    \begin{aligned}
    f_z(X,C^*) \geq &~ f_z(\cluster_z(y,C^*) \setminus B_X^\circ(y,T), C^*) \\
    \geq &~ zT^2\cdot (1-\lambda)^2 \geq |Y|T^2\cdot (1-\lambda)^2 ~.
    \end{aligned}
    \]
    The last inequality follows from the fact that $|Y| \leq |O|= z$. \qedhere
\end{proof}

\begin{lem}
\label{lem:opt_lowerbound2}
    Suppose each optimal cluster has size at least $3z$, and $\max_{o\in Z'}d(o,C^*) = \delta T$ for some $\delta>1$, then $f_z(X,C^*) \geq |Y|T^2\cdot (\delta-1)^2$.
\end{lem}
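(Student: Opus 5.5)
Fix a true outlier $o\in Z'$ attaining $d(o,C^*)=\delta T$ with $\delta>1$. Since $o\notin O$, we have $r_o\le T$. With $c=3$ the algorithm uses the $2z$-th neighbor, so $|B_X(o,T)|\ge 2z$: the closed ball of radius $T$ around $o$ contains at least $2z$ points of $X$, possibly counting $o$ itself. I would use this ball to locate many true inliers that lie far from $C^*$.

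\textbf{Step 1 (counting inliers in the ball).} At most $z$ points of $X$ are true outliers, and $o$ is one of them. Hence $B_X(o,T)$ contains at least $2z-z=z$ true inliers; more carefully, at least $2z-(z-1)-1\ge z$ once $o$ is counted among the outliers in the ball. Any one of the small possible off-by-one conventions still leaves at least $z$ true inliers. Since $|Y|\le |O|=z$, this count is at least $|Y|$.

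\textbf{Step 2 (distance bound).} For each true inlier $x\in B_X(o,T)$ and any center $c'\in C^*$, the triangle inequality gives
\[
d(x,c')\ge d(o,c')-d(o,x)\ge \delta T - T=(\delta-1)T,
\]
because $d(o,c')\ge d(o,C^*)=\delta T$. So $d(x,C^*)\ge(\delta-1)T>0$.

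\textbf{Step 3 (summing).} These points are true inliers, so each contributes $d(x,C^*)^2$ to $f_z(X,C^*)$. Summing over them gives
\[
f_z(X,C^*)\ge z(\delta-1)^2T^2\ge |Y|T^2(\delta-1)^2.
\]

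The only delicate point is the counting in Step 1: deciding whether $o$ counts as its own neighbor, and whether the ball is open or closed. Ties are handled because $r_o\le T$ guarantees the $2z$-th neighbor lies in the closed ball, and I take the conventions consistently with \Cref{lem:opt_lowerbound1}. I note that the size assumption on the clusters is not actually needed for this bound; it is included only to match the setting of the other lemma.
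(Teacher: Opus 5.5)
Your proof is correct and follows essentially the same argument as the paper. You use $r_o\le T$ to get $|B_X(o,T)|\ge 2z$, hence at least $z$ true inliers in the ball, each at distance at least $(\delta-1)T$ from $C^*$ by the triangle inequality, and then sum using $|Y|\le z$; your direct triangle-inequality step is just the contrapositive of the paper's contradiction argument, and you are right that the cluster-size assumption is never used here.
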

\begin{proof}
    Let $o = \arg\max_{o'\in Z'}\{d(o',C^*)\}$, so $d(o,C^*)=\delta T$. Since $o\in X\setminus O$, we have $r_o \leq T$, which implies $|B_X(o,T)|\geq 2z$. Therefore, there must be at least $z$ true inliers in $B_X(o,T)$. For each such inlier $x$, we must have $d(x,C^*) \geq (\delta-1)T$, otherwise there would be the following contradiction: 
    \[
    d(o,C^*) \leq d(o,x)+d(x,C^*) <   T + (\delta-1)T = \delta T ~.
    \]
    Thus, we must have
    \[
    f_z(X,C^*) \geq \sum_{x\in B_X(o,T)\cap X_z(C^*)} d(x,C^*)^2
    \geq zT^2 \cdot (\delta-1)^2 \geq |Y|T^2\cdot (\delta-1)^2 ~. \qedhere
    \]
\end{proof}

\begin{lem}
\label{lem:cost_upperbound1}
    Suppose each optimal cluster has size at least $3z$, then $f(X\setminus O, C^*) \leq 9\cdot f_z(X,C^*)$~.
\end{lem}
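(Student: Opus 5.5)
Since $X\setminus O = (X_z(C^*)\setminus Y)\cup Z'$, we start from the decomposition
\[
f(X\setminus O,C^*) = \sum_{x\in X_z(C^*)\setminus Y} d(x,C^*)^2 + \sum_{o\in Z'} d(o,C^*)^2 \le f_z(X,C^*) + \sum_{o\in Z'} d(o,C^*)^2 .
\]
It then suffices to show $\sum_{o\in Z'} d(o,C^*)^2 \le 8\, f_z(X,C^*)$. Every $o\in Z'$ satisfies $d(o,C^*)\le \delta T$, where $\delta T=\max_{o\in Z'}d(o,C^*)$ as in \Cref{lem:opt_lowerbound2}. Since $|Z'|=|Y|$, the extra term is at most $|Y|\delta^2T^2$. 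We also let $\lambda T=\min_{y\in Y}d(y,C^*)$ as in \Cref{lem:opt_lowerbound1}.

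The key observation ties $\delta$ to $\lambda$. Every true outlier is at least as far from $C^*$ as every true inlier, since $Z(C^*)$ consists of the $z$ farthest points. Hence each $o\in Z'$ has $d(o,C^*)\ge \max_{y\in Y} d(y,C^*)\ge \lambda T$. This alone does not bound $\delta$ from above, so we instead combine the two lemmas:
\[
f_z(X,C^*)\ge |Y|T^2\max\{\lambda^2,(1-\lambda)^2,(\delta-1)^2\}.
\]
Using \Cref{lem:opt_lowerbound1} only through the bound $\max\{\lambda^2,(1-\lambda)^2\}\ge 1/4$ gives nothing about $\delta$. We therefore simply use $\max\{\lambda^2,(1-\lambda)^2\}\ge 1/4$ and $(\delta-1)^2$ separately, writing $M := f_z(X,C^*)/(|Y|T^2)$. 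Then $M\ge 1/4$ and $\delta\le 1+\sqrt{M}$ (if $\delta\le 1$, then trivially $\delta\le 1\le 1+\sqrt M$). Consequently
\[
\sum_{o\in Z'} d(o,C^*)^2\le |Y|T^2(1+\sqrt M)^2 .
\]
We need $(1+\sqrt M)^2\le 8M$, i.e.\ $1+\sqrt M\le 2\sqrt2\sqrt M$, i.e.\ $\sqrt M\ge 1/(2\sqrt2-1)\approx 0.547$, i.e.\ $M\ge 0.299$. Unfortunately $M\ge 1/4$ falls just short, so this crude combination yields roughly $9.?$ rather than $9$.

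The main obstacle is therefore tightening this step, and the fix is to avoid discarding the true-inlier cost $f(X_z(C^*)\setminus Y)$ that was used twice. The cleaner route I would try first is to bound the total cost directly by $|Y|T^2(1+\sqrt M)^2 + f_z(X,C^*)$, with $f_z(X,C^*)=M|Y|T^2$. We want
\[
(1+\sqrt M)^2+M\le 9M,
\]
which is the same condition as before, so we must improve the bound on $\delta$ itself. For this I would bound $d(o,C^*)$ more carefully via the ball argument of \Cref{lem:opt_lowerbound2}: $o$ has at least $z$ true inliers within distance $T$, so $d(o,C^*)\le T+\min_x d(x,C^*)$. Averaging over those $z$ inliers gives
\[
d(o,C^*)^2\le (1+\epsilon)T^2+(1+1/\epsilon)\,\frac{f_z(X,C^*)}{z}
\]
for each $o$. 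Summing over the $|Y|\le z$ points of $Z'$ gives at most $(1+\epsilon)|Y|T^2+(1+1/\epsilon)f_z(X,C^*)$. Using $|Y|T^2\le 4f_z(X,C^*)$ from \Cref{lem:opt_lowerbound1}, the extra term is at most $[4(1+\epsilon)+1+1/\epsilon]\,f_z(X,C^*)$, which is minimized at $\epsilon=1/2$, giving $8f_z(X,C^*)$. Adding the inlier cost $f_z(X,C^*)$ yields exactly $9f_z(X,C^*)$, which is what we want; I expect this to be the intended argument.
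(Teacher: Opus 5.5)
\textbf{The final step contains an arithmetic error, and the argument proves only the constant $10$.} At $\epsilon=1/2$ you get $4(1+\epsilon)+1+1/\epsilon=6+3=9$, not $8$. In fact $\min_{\epsilon>0}(5+4\epsilon+1/\epsilon)=9$, so what you prove is $f(X\setminus O,C^*)\le 10\,f_z(X,C^*)$.

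No better choice of $\epsilon$ rescues this. Your per-point bound $d(o,C^*)\le T+\sqrt{f_z(X,C^*)/z}$ is essentially the bound $\delta\le 1+\sqrt M$ from your first attempt, since both come from the ball argument of \Cref{lem:opt_lowerbound2}. The values $\lambda=1/2$, $\delta=3/2$, $|Y|=z$, $|Y|T^2=4f_z(X,C^*)$ satisfy \Cref{lem:opt_lowerbound1} and \Cref{lem:opt_lowerbound2} with equality. At that point $f_z(X,C^*)+|Y|\delta^2T^2=10\,f_z(X,C^*)$. So no argument that drops the removed inliers and uses only these two lemmas can beat $10$. Your crude computation also gives exactly $1+(3/2)^2/(1/4)=10$ at $M=1/4$, not something just above $9$.

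\textbf{The missing ingredient is the cost of the true inliers that the algorithm wrongly removes.} You allude to this but never use it. Since $X\setminus O$ is the disjoint union of $X_z(C^*)\setminus Y$ and $Z'$, you have the exact identity $f(X\setminus O,C^*)=f_z(X,C^*)-f(Y,C^*)+f(Z',C^*)$, with $f(Y,C^*)\ge |Y|\lambda^2T^2$. At the extremal point above, this saves $|Y|T^2/4=f_z(X,C^*)$, which is exactly the gap between $10$ and $9$.

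The paper keeps this negative term and runs a three-case analysis: $\delta\le 3/2$; $\delta>3/2$ with $\lambda\ge\delta-1$; and $\delta>3/2$ with $\lambda<\delta-1$.

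Your own ingredients also finish the proof once the term is kept. Set $M=f_z(X,C^*)/(|Y|T^2)\ge\max\{\lambda^2,(1-\lambda)^2\}$ and use $\delta\le 1+\sqrt M$. Then the ratio is at most $1+\bigl[(1+\sqrt M)^2-\lambda^2\bigr]/M$.
\begin{itemize}
\item For $\lambda\le 1$ this expression is decreasing in $M$, so it suffices to evaluate it at $M=\max\{\lambda^2,(1-\lambda)^2\}$.
\item For $\lambda\le 1/2$ that value is $1+4/(1-\lambda)$, and for $1/2\le\lambda\le 1$ it is $1+(1+2\lambda)/\lambda^2$. Both are at most $9$, with equality at $\lambda=1/2$.
\item For $\lambda>1$ the ratio is at most $4$.
\end{itemize}
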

\begin{proof}
    We distinguish between the following three cases:

    \paragraph{Case 1} If $\max_{o\in Z'}d(o,C^*) \leq \frac{3}{2}T$, then by~\Cref{lem:opt_lowerbound1}, assuming $\min_{y\in Y}d(y,C^*) = \lambda T$ for some $\lambda\geq 0$, then we have
    \[
    \begin{aligned}
        f(X\setminus O,C^*)
        \leq &~ f_z(X,C^*) + f(Z',C^*) - f(Y,C^*)\\
        \leq &~ f_z(X,C^*) + |Z'|\cdot \left(3T/2\right)^2 - |Y|\cdot (\lambda T)^2\\
        \leq &~ f_z(X,C^*)\cdot \left(1 + \frac{\left(\frac{3}{2}\right)^2-\lambda^2}{\max\{\lambda^2, (1-\lambda)^2\}}\right) ~,
    \end{aligned}
    \]
    which is maximized at $\lambda =\frac{1}{2}$ and takes value $9\cdot f_z(X,C^*)$.
    
    \paragraph{Case 2} If $\max_{o\in Z'}d(o,C^*) = \delta T$ for some $\delta > \frac{3}{2}$, and $d(y,C^*) \geq (\delta-1)T$ for all $y\in Y$, then by~\Cref{lem:opt_lowerbound1}, we have
    \[
    \begin{aligned}
        f(X\setminus O,C^*)
        \leq &~ f_z(X,C^*) + f(Z',C^*) - f(Y,C^*)\\
        \leq &~ f_z(X,C^*) + |Z'|\cdot (\delta T)^2 - |Y|\cdot ((\delta-1)T)^2\\
        \leq &~ f_z(X,C^*)\cdot \left(1 + \frac{2\delta-1}{(\delta-1)^2}\right) ~,
    \end{aligned}
    \]
    which is at most $9\cdot f_z(X,C^*)$ when $\delta > \frac{3}{2}$.
    
    \paragraph{Case 3} If $\max_{o\in Z'}d(o,C^*) = \delta T$ for some $\delta > \frac{3}{2}$, and $\min_{y\in Y}d(y,C^*) =\lambda T$ for some $0\leq \lambda < \delta-1$, then by~\Cref{lem:opt_lowerbound1} and~\Cref{lem:opt_lowerbound2}, we have
    \[
    \begin{aligned}
        f(X\setminus O,C^*)
        \leq &~ f_z(X,C^*) + f(Z',C^*) - f(Y,C^*)\\
        \leq &~ f_z(X,C^*) + |Z'|\cdot (\delta T)^2 - |Y|\cdot (\lambda T)^2\\
        \leq &~ f_z(X,C^*)\cdot \left(1 + \frac{\delta^2-\lambda^2}{\max\{(1-\lambda)^2, (\delta-1)^2\}}\right) ~,
    \end{aligned}
    \]
    which is at most $9 \cdot f_z(X,C^*)$ when $\delta > \frac{3}{2}$ and $\lambda < \delta-1$.
\end{proof}

Finally, we apply the $\beta$-approximate $k$-Means algorithm on $X\setminus O$, and obtain a set of centers $C$ such that $f(X\setminus O,C) \leq 9\beta\cdot f_z(X,C^*)$. If we then recompute the set of inliers with respect to $C$, the clustering cost does not increase, therefore, 
\[
f_z(X,C) \leq f(X\setminus O,C) \leq 9\beta\cdot f_z(X,C^*) ~.
\]
This concludes the proof of~\Cref{thm:main_result} for $c=3$.

\subsection{A New Variant of KNN}

Numerous variants that utilize more information about the K-neighborhood have emerged after the proposal of the original KNN heuristic. For example, instead of using the distance to the $K$-th nearest neighbor, \citet{EskinAPPS02} used the sum of distances to the $K$ nearest neighbors. 

We propose another variant that can be analyzed using the same framework as~\Cref{thm:main_result} and admits better approximation ratios for all $c$.
This variant is inspired by an attempt to analyze the aforementioned variant that identifies outliers based on the sum of distances to the $K$ nearest neighbors. We observe that it is not robust to adversarially generated input: the true outliers can be very close to each other, gaining an advantage when comparing the sum of distances, which leads to an even worse approximation ratio. As a result, we propose to omit the distances to the nearest $z$ neighbors, so that outliers cannot ``help'' each other achieve comparatively smaller total distances. 
We present the formal algorithm in~\Cref{alg:OKMeans2}: for each point $x$, we compute the total distance $s_x$ to its $K$-nearest-neighbors for all $K\in[z+1,cz]$, and remove the $z$ points with largest $s_x$ as outliers. 



\begin{thm} [Restatement of~\Cref{thm:main_result2_informal}]
\label{thm:main_result2}
    Let $I=(X,k,z)$ be an instance of robust $k$-Means and $C^*$ be the optimal set of centers. Suppose $\mathcal{A}$ is a $\beta$-approximation algorithm for $k$-Means, and each cluster induced by $C^*$ has size at least $cz$, then there is an algorithm that computes a set of centers $C$ such that $f_{z}(X,C) \leq \Psi(c)\beta\cdot f_z(X,C^*)$,
    where $\Psi(c)=\frac{(1+\sqrt{c-1})^2}{c-1}(x^*)^2$ and $x^*$ is the unique real root greater than $1$ of the following equation:
    \[
    \left[\left(1 + \sqrt{c-1}\right)^2x^2 - (c-1)\right](x-1)^2 - 2x+1=0 ~.
    \]
\end{thm}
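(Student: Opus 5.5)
The plan is to rerun the three-lemma framework used for \Cref{thm:main_result}, with the radius $r_x$ replaced by the aggregated score $s_x$. Throughout, $t$ denotes the \emph{average} mid-range neighbor distance at the threshold, so the threshold enters only through $t$. Set $m=(c-1)z$, let $T=\min_{x\in O} s_x$, and put $t=T/m$. Define $Y$ and $Z'$ exactly as before, so $|Y|=|Z'|\le z$. We again use the decomposition
\[
f(X\setminus O,C^*) \le f_z(X,C^*) + f(Z',C^*) - f(Y,C^*).
\]

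\textbf{Step 1 (analogue of \Cref{lem:opt_lowerbound1}).} Let $y\in Y$ attain $\min_{y\in Y} d(y,C^*)=\lambda t$, and let $c^*=\closest(y,C^*)$. Trivially $f_z(X,C^*)\ge |Y|\lambda^2t^2$. For the other bound, use that $\cluster_z(y,C^*)$ has at least $cz$ points. The $i$-th nearest neighbor distance of $y$ in $X$ is at most the $i$-th smallest distance from $y$ to points of that cluster. Hence $s_y$ is at most the sum of distances from $y$ to some $m$ cluster points $x_1,\dots,x_m$, and each satisfies $d(y,x_j)\le \lambda t + d(x_j,c^*)$. Since $y\in O$, we have $s_y\ge T=mt$, so
\[
\sum_{j} d(x_j,c^*) \ge m(1-\lambda)t.
\]
By Cauchy--Schwarz,
\[
f_z(X,C^*)\ge \sum_j d(x_j,c^*)^2 \ge m(1-\lambda)^2t^2 \ge (c-1)|Y|(1-\lambda)^2t^2 .
\]
The last inequality uses $|Y|\le z$ and applies when $\lambda<1$.

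\textbf{Step 2 (analogue of \Cref{lem:opt_lowerbound2}).} Let $o\in Z'$ attain $d(o,C^*)=\delta t$ with $\delta>1$. Since $o\notin O$, we have $s_o\le T$. Among the first $cz$ neighbors of $o$ there are at most $z$ true outliers, so at least $m$ true inliers. Take the $j$-th such inlier, for $j=1,\dots,m$. Its rank is at most $z+j$, so its distance to $o$ is at most $d(o,\NN_X(o,z+j))$. Summing, these $m$ inliers $x$ satisfy $\sum d(o,x)\le s_o\le mt$. The triangle inequality gives $d(x,C^*)\ge \delta t - d(o,x)$, so $\sum d(x,C^*)\ge m(\delta-1)t$. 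Cauchy--Schwarz then yields
\[
f_z(X,C^*)\ge (c-1)|Y|(\delta-1)^2t^2 .
\]
Note that the $m$ terms in $s_x$ are exactly what lets Cauchy--Schwarz recover the full factor $c-1$. The vanilla analysis only gets roughly $(c-1)/2$ points beyond a single radius.

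\textbf{Step 3 (case analysis, analogue of \Cref{lem:cost_upperbound1}).} Combining the decomposition with $|Z'|=|Y|$, $d(o,C^*)\le\delta t$ for $o\in Z'$, and $d(y,C^*)\ge\lambda t$ for $y\in Y$ gives
\[
\frac{f(X\setminus O,C^*)}{f_z(X,C^*)}\le 1+\frac{\delta^2-\lambda^2}{\max\{\lambda^2,\,(c-1)(1-\lambda)^2,\,(c-1)(\delta-1)^2\}} ,
\]
with the same three cases as before: $\delta$ below a tuned threshold $\delta_0$; $\lambda\ge\delta-1$; and $\lambda<\delta-1$. This expression is identical to the vanilla general-$c$ bound, with $(c-1)/2$ replaced by $a:=c-1$. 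So the optimization from \Cref{app:proof_main_result} transfers verbatim. The worst case balances $\lambda^2=a(1-\lambda)^2$, i.e.\ $\lambda=\sqrt a/(1+\sqrt a)$, against the threshold $\delta_0$. Writing $\delta_0$ in the scaled variable $x$, the balancing condition reduces to the stated quartic, which gives the ratio $\Psi(c)=\frac{(1+\sqrt{c-1})^2}{c-1}(x^*)^2$.

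\textbf{Finishing and main obstacle.} Finally, run $\mathcal A$ on $X\setminus O$ and recompute inliers as in the $c=3$ argument. This gives $f_z(X,C)\le\Psi(c)\beta f_z(X,C^*)$. I expect the main obstacle to be Step 3: choosing $\delta_0$ so that all three cases share the same maximum, and verifying that the resulting equation has a unique root greater than $1$, which is where monotonicity of $\Psi$ comes from. I would handle this by checking that each case's bound is monotone in $\delta$ on its region. Then equate the Case 1 supremum (at $\delta=\delta_0$, $\lambda=\sqrt a/(1+\sqrt a)$) with the Case 3 supremum (where $(1-\lambda)^2=(\delta-1)^2$ meets $\lambda^2$). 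Doing this in the substitution used in the appendix should reproduce the stated equation.
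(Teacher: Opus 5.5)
\textbf{Verdict: there is a genuine gap in Step 3.} Your Steps 1 and 2 are correct. They coincide with \Cref{lem:opt_lowerbound3_} and \Cref{lem:opt_lowerbound4_} specialized to $l=1$, $r=c$, and your direct Cauchy--Schwarz versions are, if anything, cleaner. One small fix: write $(1-\lambda)_+$ and $(\delta-1)_+$ in the combined maximum of Step 3, since those lower bounds hold only for $\lambda<1$ and $\delta>1$.

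\textbf{Your balancing does not produce the quartic.} Put $a=c-1$ and $\lambda^*=\frac{\sqrt a}{1+\sqrt a}$. Multiplying $V_1(x)=V_2(x)$ by $a(x-1)^2$ gives exactly the stated quartic, where
\begin{itemize}
\item $V_1(x)=\frac{(1+\sqrt a)^2}{a}x^2$ is the Case~1 value at $\lambda^*$;
\item $V_2(x)=1+\frac{2x-1}{a(x-1)^2}$ is the Case~2 bound, with numerator $\delta^2-(\delta-1)^2$ when $\lambda\ge\delta-1$.
\end{itemize}
So the paper balances Case~1 against Case~2 and then asserts that Case~3 is dominated. You instead equate Case~1 with the point where $\lambda^2=a(1-\lambda)^2=a(\delta-1)^2$, namely $\lambda=\lambda^*$, $\delta_1=1+\frac{1}{1+\sqrt a}$. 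The value there is the constant $\delta_1^2/\lambda^{*2}=(2+\sqrt a)^2/a$, so balancing against it just returns $\delta_0=\delta_1$ in closed form, not the quartic root.

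\textbf{Your triple point is nevertheless the crux.} Fix $\lambda\le 1$ and consider $1+\frac{\delta^2-\lambda^2}{\max\{\lambda^2,\,a(1-\lambda)^2,\,a(\delta-1)_+^2\}}$.
\begin{itemize}
\item In $\delta$, this increases until the last term becomes the maximum, and decreases afterwards.
\item At that turning point its value is $1+\frac{4}{a(1-\lambda)}$ for $\lambda\le\lambda^*$, and $(\frac1\lambda+\frac1{\sqrt a})^2$ for $\lambda\ge\lambda^*$.
\item The region $\lambda>1$ contributes at most $(1+1/\sqrt a)^2$.
\end{itemize}
Hence the best ratio obtainable from Steps 1--2 is exactly $(2+\sqrt a)^2/a$, attained at $(\lambda^*,\delta_1)$.

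\textbf{Consequence by range of $c$.} A one-line computation gives $V_2(\delta_1)-V_1(\delta_1)=\frac{a-1}{a}$. Therefore $x^*\ge\delta_1$, equivalently $\Psi(c)\ge(2+\sqrt a)^2/a$, holds if and only if $c\ge 2$.
\begin{itemize}
\item For $c\ge2$, Step 3 closes directly with no case split.
\item For $1<c<2$, the point $(\lambda^*,\delta_1)$ lies in Case~3, since $\lambda^*<\delta_1-1$ and $\delta_1>x^*$. There the lemmas only give $(2+\sqrt a)^2/a>\Psi(c)$. For example, at $c=1.5$ this is about $14.66$, versus $\Psi(1.5)\approx14.30$.
\end{itemize}
So for $1<c<2$ neither your argument nor the paper's sketch establishes the stated $\Psi(c)$. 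Saying the vanilla optimization ``transfers verbatim'' inherits the paper's unverified claim that Case~3 is dominated by Case~2, and that claim fails precisely in this range.
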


The proof is deferred to~\Cref{app:proof_main_result2}. In particular, for $c=3$ this gives a $\Psi(3)\approx 5.98$-approximate reduction from robust $k$-Means to standard $k$-Means, which is significantly better than the $9$-approximate reduction given by~\Cref{thm:main_result}.
In \cref{tab:c_ratio} we list and compare the approximation ratios obtained by~\Cref{thm:main_result} and~\Cref{thm:main_result2} for some values of $c$. The approximation ratios improve steadily as the ``large-cluster" assumption becomes stronger.

\subsection{Coreset and Runtime Analysis}

The runtime of our proposed methods is $O(dn^2)$ when they are directly applied to the full dataset, dominated by the computation of KNN distances, which is impractical for large datasets.
Therefore, computing a coreset to reduce the data size is useful. We call a subset of points an $\alpha$-coreset if any $\beta$-approximate solution on the coreset yields an $\alpha\beta$-approximate solution on the original dataset.

\begin{thm}[\cite{ImQMSZ20}]
    There is an algorithm that computes an $O(1)$-coreset of size $O(k\log n)$ in $O(dkn\log n)$ time.
\end{thm}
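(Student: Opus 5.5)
The plan is the standard ``bicriteria seeding plus snapping'' argument of \citet{ImQMSZ20}. First I would build a small set $S$ of representatives by adaptive $D^2$-sampling, as in $k$-Means++ seeding, run for $m=O(k\log n)$ rounds. Each round samples $x$ with probability proportional to $d(x,S)^2$ and keeps the array $d(\cdot,S)$ updated in $O(dn)$ time, so the total time is $O(dkn\log n)$. Next I map every $x\in X$ to $\pi(x)=\closest(x,S)$ and give each $s\in S$ the weight $w(s)=|\pi^{-1}(s)|$. The coreset is the weighted set $W=(S,w)$, of size $O(k\log n)$, and all quantities on it are weighted: a weighted robust cost $f_z(W,C)$ discards total weight $z$, using fractional weight on the boundary representative if needed.

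The first key step is a bicriteria guarantee for $S$. I want $\sum_{x\in X} d(x,S)^2 = O(1)\cdot\OPT$ with constant probability, where $\OPT$ is the benchmark being approximated. For plain $k$-Means this is the known result that $O(k)$ or $O(k\log n)$ rounds of $D^2$-sampling give an $O(1)$-bicriteria solution (Aggarwal--Deshpande--Kannan style). For the robust setting I would aim for the weaker statement that the cost after discarding the $z$ farthest points satisfies $f_z(X,S)=O(1)\cdot f_z(X)$. The argument would be a potential function over the optimal clusters: a cluster counts as ``covered'' once some sampled point lies within a constant factor of its average radius. While the uncovered clusters carry a constant fraction of the remaining $D^2$ mass, each sample covers a new cluster with constant probability. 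The extra $\log n$ factor in the number of rounds is what absorbs both the failure probability and the mass that outliers can steal from the sampling distribution.

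The second step is the snapping argument. Fix any $k$ centers $C$ and any outlier assignment. For each $x$, the triangle inequality and $(a+b)^2\le(1+\epsilon)a^2+(1+1/\epsilon)b^2$ give
\[
d(\pi(x),C)^2\le(1+\epsilon)\,d(x,C)^2+(1+1/\epsilon)\,d(x,\pi(x))^2 ,
\]
and the symmetric bound holds with $x$ and $\pi(x)$ swapped. Because the weights preserve multiplicities, any choice of $z$ outliers in $X$ maps to a choice of total weight $z$ in $W$, and conversely. Summing the bounds, the robust costs on $X$ and on $W$ agree up to a constant factor plus an additive $O(f(X,S))=O(f_z(X))$, after also accounting for the discarded points.

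Finally I would chain the pieces: a $\beta$-approximate solution $C$ on $W$ satisfies
\[
f_z(X,C)\le O(1)\,f_z(W,C)+O(f_z(X))\le O(\beta)\,f_z(W)+O(f_z(X))=O(\beta)\,f_z(X),
\]
which is exactly the $O(1)$-coreset property. I expect the main obstacle to be the first step, the robust bicriteria bound. $D^2$-sampling is attracted to far outliers, so the analysis must show that within $O(k\log n)$ rounds the samples still hit every optimal inlier cluster. Alternatively, it must show that the outliers and the clusters missed by the seeding together contribute only $O(f_z(X))$ once the $z$ farthest points are discarded. My first attempt would be to charge each round either to progress on an uncovered inlier cluster or to sampling a true outlier. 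The outlier samples then only inflate the number of rounds, and I would argue that the $O(k\log n)$ budget suffices, possibly using the large-cluster assumption when few outliers are expected to be sampled.
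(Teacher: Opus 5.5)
The paper does not prove this statement; it quotes it from \citet{ImQMSZ20}. So your plan has to stand on its own, and as written both of its steps fail.

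\textbf{Step 1 is false for plain $D^2$-sampling, even under the large-cluster assumption.} Take $k=2$ and the following instance:
a cluster $A$ of $N\gg z$ points at the origin;
a cluster $B$ of $3z$ points at a point $p$ with $d(0,p)=D$;
and $z$ outliers at pairwise distance at least $R$ and at distance at least $R$ from $A\cup B$, with $R\gg D$.
Then $f_z(X)=0$ and every optimal cluster has at least $3z$ points.

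With high probability the first (uniform) center lies in $A$. As long as $S$ misses $B$, each round picks a point of $B$ with probability at most $3zD^2/((z-m)R^2)$, where $m=O(k\log n)<z$. For $R/D$ large, $S$ therefore misses $B$ with high probability. In that event $f_z(X,S)\ge(3z-m)D^2$, which is unbounded relative to $f_z(X)$.

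Your charging argument breaks exactly here. Each far outlier absorbs its own round, so charging only bounds the number of rounds by $O(k\log n+z)$. You need a sampling rule that $z$ far points cannot dominate. Candidates are uniform successive sampling, or $D^2$-sampling with weights capped at a threshold of order $f_z(X)/z$; the latter needs a guess of $\OPT$.

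\textbf{Step 2 relies on a bound you cannot have.} The snapping error over kept points is bounded by $\sum_{x\in X} d(x,\pi(x))^2=f(X,S)$. This is the \emph{non-robust} cost, and it includes every unsampled outlier. So $f(X,S)=O(f_z(X))$ fails whenever $z>|S|$: in the example above $f(X,S)\ge(z-m)R^2$ while $f_z(X)=0$. If you only have $f_z(X,S)=O(f_z(X))$, the $z$ points farthest from $S$ must be handled separately. Storing them explicitly gives size $O(k\log n+z)$, not $O(k\log n)$.

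Worse, snapping with $|S|\ll z$ cannot give an exact-$z$ guarantee without further assumptions. Consider this instance:
$N\gg z$ points at the origin;
a cluster $B$ of $2$ points at $p$;
and $z\ge 2|S|$ distinct outliers packed in a tiny ball far away, on the opposite side of the origin from $p$.
Then $f_z(X)=0$, attained only by centers at the origin and $p$. Take any $S$ containing the origin; there are two cases.
\begin{itemize}
\item If $S$ contains no outlier, all outliers snap to the origin. Then $\{0,x\}$ for arbitrary $x$ has cost $0$ on $W$, by discarding the weight at $p$.
\item If $S$ contains $m'\ge 1$ outliers, all outliers snap to these. Some sampled outlier $o$ receives weight at least $z/m'\ge 2$, so $\{0,o\}$ has cost $0$ on $W$.
\end{itemize}
Either way, a $W$-optimal solution leaves $B$ unserved. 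On $X$ such a solution has at least $z+1$ points at positive distance, so its robust cost is positive, and its ratio to $f_z(X)=0$ is unbounded.

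Your transfer step must therefore exploit either a lower bound on optimal cluster sizes (here $|B|\le z/|S|$) or a relaxed outlier budget. It must also track which points are discarded in $X$ versus $W$, rather than absorbing everything into $f(X,S)$. The current proposal does neither.
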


Thus, the full algorithm first runs the above coreset construction and then applies our proposed methods, Algorithm~\ref{alg:OKMeans} or~\ref{alg:OKMeans2}, on the coreset to obtain the $k$ centers. The KNN computation on the coreset takes $O(d(k\log n)^2)$ time, which is dominated by the coreset construction cost. Therefore, the overall runtime is $O(dkn\log n)$, which is near-linear in the input size.

\section{Experiments}
\label{sec:experiments}

\subsection{Experimental Setup}

\textbf{Datasets.}
We evaluate our method on four real-world datasets: SKIN (245{,}057 $\times$ 3, $k = 10$), SUSY (5{,}000{,}000 $\times$ 18, $k = 10$), SHUTTLE (43{,}500 $\times$ 9, $k = 10$), and KDDFULL (4{,}898{,}431 $\times$ 37, $k = 3$), following \citet{HuangFH0024,ImQMSZ20,DeshpandeKP20}.
For outlier generation, we adopt the experiment settings from \citet{HuangFH0024}. 
We first normalize each dataset so that every dimension has mean 0 and standard deviation 1.
For SKIN and SUSY, we inject 1\% synthetic outliers uniformly sampled from the hypercube $[-\xi, \xi]^d$ with $\xi \in \{5, 10\}$, yielding the datasets SKIN-5, SKIN-10, SUSY-5, and SUSY-10.
For SHUTTLE, which contains 43{,}500 points with 99.6\% belonging to the four largest classes, we treat the two smallest classes (17 points total) as outliers.
For KDDFULL, which has 4{,}898{,}431 points with 99.07\% concentrated in the three largest classes, we designate the remaining 23 smaller classes (45{,}747 points) as outliers.

\textbf{Baseline Algorithms.}
We compare our algorithm with the following baselines:
\begin{itemize}[leftmargin=*]
    \item TIKMeans \cite{HuangFH0024}, which runs $O(k/\varepsilon)$ sampling iterations and then applies a center-reduction procedure with the greedy weighted clustering method of \citet{BhaskaraVX19}.
    \item IKMeans \cite{HuangFH0024}, which performs $O(k/\varepsilon)$ sampling iterations and then applies weighted $k$-means++.
    \item RobustKMeans++ \cite{DeshpandeKP20}, which samples $O(kn/z)$ candidate centers and applies weighted $k$-means++ to obtain the final clustering.
    \item NKMeans \cite{ImQMSZ20}, a density-based sampling method that filters points far from high-density regions and runs clustering on the resulting coreset. 
    \item KMeans++ \cite{arthur2007k}, which is the most standard outlier-unaware heuristic for $k$-Means.
\end{itemize}


\textbf{Implementation Details.}
All experiments are conducted on a server equipped with an AMD EPYC 7532 32-Core Processor and 768 GB of RAM. We reimplement all baseline algorithms from scratch and open-source the code. To ensure a fair comparison, we optimize all implementations using standard acceleration techniques, including vectorization, multi-threading, and BLAS-optimized matrix multiplication \cite{harris2020array} (computing distances via $\|x-y\|^2 = \|x\|^2 + \|y\|^2 - 2x^\top y$ with pre-computed norms).
To accelerate computationally intensive methods (specifically, OKMeans (both variants), NKMeans, and KMeans++), we utilize coreset construction via simple uniform sampling. This reduces the input to a smaller representative set, with the target outlier count scaled proportionally to the sampling ratio. The coreset sizes are set to 10,000 for SHUTTLE, 24,000 for the SKIN datasets, and 70,000 for the SUSY datasets and KDDFULL. 

\textbf{Experimental Details.}
We conduct two sets of experiments to demonstrate the effectiveness of our proposed methods, OKMeans (\Cref{alg:OKMeans}) and OKMeans2 (\Cref{alg:OKMeans2}). (1) We benchmark all algorithms on the six datasets described above. 
For the baseline algorithms, we follow exactly the experimental settings from prior works~\cite{HuangFH0024,ImQMSZ20,DeshpandeKP20,ChawlaG13}. 
Our \cref{alg:OKMeans,alg:OKMeans2} rely on $K$-nearest-neighbor search to compute pointwise distances, and we employ Scikit-Learn \cite{scikit-learn} for modest datasets (SKIN-5, SKIN-10, SHUTTLE) and FAISS \cite{douze2025faiss} for larger datasets (SUSY-5, SUSY-10, KDDFULL) to obtain better runtime. When comparing with baseline algorithms, we set the parameter $c$ to be $3$ (recall that $c$ affects the choice of $K$ in KNN).
(2) We perform a parameter sensitivity analysis by varying the parameter $c$. We use the FAISS implementation for our two algorithms in this experiment, setting $c \in \{3,4,5,7,10,15,20\}$. We further experiment with the classic setting of the KNN heuristic, where $K$ is usually chosen to be a constant, using $K\in\{2,5,10\}$.

\subsection{Results and Analysis}\label{sec:experiment:results}

\begin{table*}[!t]
\centering
\caption{Comparison results of robust $k$-means algorithms on each dataset. Best values are in bold, and second-best values are underlined. See more details in \cref{sec:experiment:results}.
}
\label{tab:baseline_mean_best}
\footnotesize
\setlength{\tabcolsep}{3.2pt}
\renewcommand{\arraystretch}{0.92}
\resizebox{\textwidth}{!}{%
\begin{tabular}{l c c c c @{\hspace{5mm}} l c c c c}
\toprule
Method & Cost Best $\downarrow$ & Cost Mean $\pm$ Std $\downarrow$ & Recall $\uparrow$ & Time(s) $\downarrow$ & Method & Cost Best $\downarrow$ & Cost Mean $\pm$ Std $\downarrow$ & Recall $\uparrow$ & Time(s) $\downarrow$ \\
\midrule

\multicolumn{5}{c}{\textbf{SKIN-5}} & \multicolumn{5}{c}{\textbf{SKIN-10}} \\
\cmidrule(lr){1-5}\cmidrule(lr){6-10}
TIKMeans & $5.854 \times 10^{4}$ & \best{\scicost{6.151}{0.194}{4}} & $0.742$ & $3.314$ & TIKMeans & \best{6.303 \times 10^{4}} & \best{\scicost{6.608}{0.131}{4}} & $0.940$ & $2.965$ \\
IKMeans & $6.218 \times 10^{4}$ & $\scicost{7.109}{0.885}{4}$ & \best{0.767} & \second{1.444} & IKMeans & $6.507 \times 10^{4}$ & $\scicost{7.512}{1.133}{4}$ & \second{0.950} & \second{1.368} \\
RobustKmeans++ & $6.679 \times 10^{4}$ & $\scicost{7.620}{0.711}{4}$ & $0.741$ & $1.618$ & RobustKmeans++ & $7.391 \times 10^{4}$ & $\scicost{8.857}{1.237}{4}$ & $0.936$ & $1.617$ \\
NKMeans & $5.830 \times 10^{4}$ & $\scicost{6.887}{0.646}{4}$ & $0.744$ & $4.402$ & NKMeans & $6.628 \times 10^{4}$ & $\scicost{7.057}{0.275}{4}$ & $0.944$ & $4.375$ \\
KMeans++ & \second{5.829 \times 10^{4}} & $\scicost{7.522}{1.464}{4}$ & $0.749$ & \best{0.476} & KMeans++ & $7.910 \times 10^{4}$ & $\scicost{1.018}{0.149}{5}$ & $0.891$ & \best{0.480} \\
OKMeans (Ours) & $6.060 \times 10^{4}$ & $\scicost{6.494}{0.312}{4}$ & \second{0.757} & $2.056$ & OKMeans (Ours) & \second{6.472 \times 10^{4}} & $\scicost{6.982}{0.671}{4}$ & $0.949$ & $2.045$ \\
OKMeans2 (Ours) & \best{5.768 \times 10^{4}} & \second{\scicost{6.410}{0.300}{4}} & $0.746$ & $2.395$ & OKMeans2 (Ours) & $6.490 \times 10^{4}$ & \second{\scicost{6.707}{0.166}{4}} & \best{0.955} & $2.441$ \\
\midrule

\multicolumn{5}{c}{\textbf{SUSY-5}} & \multicolumn{5}{c}{\textbf{SUSY-10}} \\
\cmidrule(lr){1-5}\cmidrule(lr){6-10}
TIKMeans & $5.103 \times 10^{7}$ & $\scicost{5.165}{0.048}{7}$ & $0.759$ & $94.824$ & TIKMeans & $5.197 \times 10^{7}$ & $\scicost{5.272}{0.040}{7}$ & $0.975$ & $89.186$ \\
IKMeans & $5.145 \times 10^{7}$ & $\scicost{5.279}{0.104}{7}$ & $0.782$ & $46.489$ & IKMeans & $5.194 \times 10^{7}$ & $\scicost{5.334}{0.087}{7}$ & $0.976$ & $42.134$ \\
RobustKmeans++ & $5.640 \times 10^{7}$ & $\scicost{5.791}{0.132}{7}$ & $0.708$ & $74.735$ & RobustKmeans++ & $5.759 \times 10^{7}$ & $\scicost{5.966}{0.130}{7}$ & $0.973$ & $74.092$ \\
NKMeans & $4.795 \times 10^{7}$ & \second{\scicost{4.825}{0.022}{7}} & $0.739$ & $17.346$ & NKMeans & $4.905 \times 10^{7}$ & $\scicost{4.941}{0.028}{7}$ & $0.972$ & $17.529$ \\
KMeans++ & $4.789 \times 10^{7}$ & $\scicost{4.885}{0.087}{7}$ & \best{0.831} & \best{9.725} & KMeans++ & $4.900 \times 10^{7}$ & $\scicost{5.173}{0.235}{7}$ & $0.980$ & \best{9.728} \\
OKMeans (Ours) & \best{4.756 \times 10^{7}} & $\scicost{4.830}{0.064}{7}$ & \second{0.799} & \second{12.840} & OKMeans (Ours) & \best{4.836 \times 10^{7}} & \best{\scicost{4.873}{0.023}{7}} & \second{0.982} & \second{12.919} \\
OKMeans2 (Ours) & \second{4.756 \times 10^{7}} & \best{\scicost{4.801}{0.034}{7}} & $0.798$ & $13.469$ & OKMeans2 (Ours) & \second{4.848 \times 10^{7}} & \second{\scicost{4.876}{0.020}{7}} & \best{0.982} & $13.407$ \\
\midrule

\multicolumn{5}{c}{\textbf{SHUTTLE}} & \multicolumn{5}{c}{\textbf{KDDFULL}} \\
\cmidrule(lr){1-5}\cmidrule(lr){6-10}
TIKMeans & \best{5.710 \times 10^{4}} & \best{\scicost{6.122}{0.260}{4}} & $0.141$ & $0.543$ & TIKMeans & $3.318 \times 10^{7}$ & \best{\scicost{3.320}{0.002}{7}} & $0.601$ & $49.850$ \\
IKMeans & $6.220 \times 10^{4}$ & $\scicost{7.277}{0.503}{4}$ & $0.141$ & \best{0.243} & IKMeans & $3.324 \times 10^{7}$ & $\scicost{3.937}{0.861}{7}$ & $0.615$ & $21.419$ \\
RobustKmeans++ & $6.246 \times 10^{4}$ & $\scicost{7.830}{1.193}{4}$ & $0.094$ & $0.363$ & RobustKmeans++ & $3.448 \times 10^{7}$ & $\scicost{4.845}{1.260}{7}$ & $0.611$ & $40.355$ \\
NKMeans & $6.487 \times 10^{4}$ & $\scicost{7.260}{0.353}{4}$ & \best{0.171} & $4.785$ & NKMeans & $3.323 \times 10^{7}$ & \second{\scicost{3.466}{0.178}{7}} & \best{0.624} & $14.859$ \\
KMeans++ & \second{6.048 \times 10^{4}} & $\scicost{7.055}{0.705}{4}$ & $0.082$ & \second{0.294} & KMeans++ & $3.321 \times 10^{7}$ & $\scicost{6.123}{2.152}{7}$ & $0.553$ & \best{6.894} \\
OKMeans (Ours) & $6.262 \times 10^{4}$ & \second{\scicost{6.823}{0.625}{4}} & \second{0.171} & $0.785$ & OKMeans (Ours) & \best{3.308 \times 10^{7}} & $\scicost{4.019}{0.770}{7}$ & \second{0.616} & \second{9.510} \\
OKMeans2 (Ours) & $6.262 \times 10^{4}$ & $\scicost{6.823}{0.625}{4}$ & $0.171$ & $0.776$ & OKMeans2 (Ours) & \second{3.308 \times 10^{7}} & $\scicost{4.250}{0.919}{7}$ & $0.613$ & $10.269$ \\
\bottomrule
\end{tabular}%
}
\end{table*}


\Cref{tab:baseline_mean_best} reports the clustering cost, outlier recall, and running time of all methods across datasets. For each algorithm, we define the \textit{Cost Best} as the minimum clustering cost achieved over 10 independent runs. The means of \textit{Recall} ($|\text{detected} \cap \text{true outliers}| / |\text{true outliers}|$) and \textit{Time} are reported in the table. We also report \textit{Cost Mean $\pm$ Std} to demonstrate the stability of the solutions found. We provide the full table of cost, recall, and runtime (means and stds) in \cref{tab:baseline_mean_std}. 

\textbf{Performance Comparison.}
As shown in \Cref{tab:baseline_mean_best}, our \textit{OKMeans} and \textit{OKMeans2} demonstrate a good balance of clustering quality and computational efficiency. 
\begin{itemize}[leftmargin=*]
    \item \textbf{Clustering Quality:} 
    Our methods achieve competitive clustering costs across all datasets and often match or outperform strong robust clustering baselines, especially on the large-scale SUSY and KDDFULL settings.
    For instance, on the large-scale SUSY-10 dataset, \textit{OKMeans} achieves a mean cost of $4.873 \times 10^7$, improving upon \textit{TIKMeans} ($5.272 \times 10^7$) and \textit{NKMeans} ($4.941 \times 10^7$). 
    \item \textbf{Runtime Efficiency:} While the classic \textit{KMeans++} is generally the fastest, it is highly unstable and fails to optimize the robust objective effectively. This instability motivates the need for robust algorithms. 
    On large-scale datasets, our methods are substantially faster than several robust baselines. For instance, on SUSY-10, OKMeans requires only $\approx 12.9$ seconds, compared with RobustKmeans++ ($\approx 74.1$s) and TIKMeans ($\approx 89.2$s). On KDDFULL, OKMeans is nearly $5\times$ faster than TIKMeans while achieving a lower best cost.
    \item \textbf{Scalability:} The results on large datasets (SUSY, KDDFULL) highlight the effectiveness of our FAISS-accelerated implementation. While \textit{TIKMeans} performs well on small data (SKIN), its runtime degrades severely on larger datasets. In contrast, \textit{OKMeans} maintains high scalability without sacrificing solution quality while maintaining competitive clustering quality.
\end{itemize}

\begin{wraptable}{r}{0.52\textwidth}
\vspace{-8pt}
\centering
\caption{Performance of the KNN heuristic on KDDFULL with constant values of $K$.}
\label{tab:kdd_metrics_bad}
\scriptsize
\setlength{\tabcolsep}{3pt}
\renewcommand{\arraystretch}{1.15}
\resizebox{\linewidth}{!}{
\begin{tabular}{l c c c c}
\toprule
Metric & OKMeans & KNN ($K=2$) & KNN ($K=5$) & KNN ($K=10$) \\
\midrule
Cost & $4.019 \times 10^{7}$ & $4.612 \times 10^{7}$ & $4.371 \times 10^{7}$ & $4.631 \times 10^{7}$ \\
Recall & 0.616 & 0.5605 & 0.5208 & 0.5693 \\
Time (s) & 9.510 & 9.141 & 9.273 & 9.104 \\
\bottomrule
\end{tabular}
}
\vspace{-8pt}
\end{wraptable}

\textbf{Parameter Sensitivity.}
For the KNN heuristic with \textit{constant values} of $K$, we observe a significant increase in cost on the KDDFULL dataset, as demonstrated in \cref{tab:kdd_metrics_bad}. The performance on other datasets is comparable with OKMeans and is deferred to \cref{tab:param_okmeans_bad}.
This suggests that for the robust $k$-Means objective, selecting $K$ to be a small fixed constant might be suboptimal in certain cases. 
The empirical results align with our theoretical finding that $K$ should scale linearly in the number of outliers to guarantee consistent good performance.

In \cref{app:figures}, \Cref{tab:param_scaling} presents the sensitivity of our methods to the parameter $c$. We provide additional visualizations in \cref{fig:param_cost_comparison,fig:param_recall_comparison,fig:param_time_comparison}. We observe that \textit{performance is robust to the choice of $c$}. Moderate values (e.g., $c \in [3, 7]$) are generally sufficient to reach the ``sweet spot'' where the cost is minimized. For example, on SUSY-5, OKMeans achieves its best cost at $c=5$ ($4.814 \times 10^7$), with minimal variance across other settings. Increasing $c$ further to 15 or 20 yields diminishing returns and does not significantly improve the clustering objective.

\section{Conclusion}

In this paper, we establish a direct connection between the classical KNN-based outlier detection heuristic and robust clustering. We provide the first rigorous theoretical analysis for KNN, explaining its strong empirical performance, and show experimentally that it matches more sophisticated methods. Our results highlight that a surprisingly simple heuristic can be highly effective, offering a new perspective on robust clustering algorithm design.

\begin{ack}
Tianle Jiang is supported by NSF grant IIS-2402823.
\end{ack}

\bibliography{main}

\bibliographystyle{plainnat}

\newpage
\appendix

\begin{center}
     {\bf \Large Appendix}
\end{center}
\section{Future Work}
\label{app:future_work}

Our findings suggest that outlier detection and robust clustering should be studied in a unified framework, leading to several immediate directions for future work:
\begin{enumerate}[leftmargin=*, topsep=2pt,itemsep=1pt,parsep=0pt]
    \item \textbf{Tighten the analysis of KNN.} 
    Our current analysis of the KNN heuristic is not tight. Establishing sharper bounds or identifying matching lower bounds remains an open problem.
    
    \item \textbf{Outlier detection for robust clustering.} 
    It would be interesting to analyze other widely used outlier detection heuristics, particularly variants of the KNN heuristic, and investigate whether they admit strong theoretical guarantees or empirical performance for the Robust $k$-Means objective.
    
    \item \textbf{Robust clustering for outlier detection.} 
    Another promising direction is to adapt advanced techniques from the robust clustering literature to design more effective outlier detection algorithms.
    
    \item \textbf{Beyond worst-case analysis.} 
    Our results focus on worst-case approximation guarantees, where both cluster structure and outliers are adversarial. Exploring refined analyses under additional, practically motivated assumptions on the data generation process may yield significantly stronger guarantees.
\end{enumerate}

\section{Deferred Proofs}\label{app:proof_main}

\subsection{Full Proof of~\Cref{thm:main_result}}
\label{app:proof_main_result}

We extend the analysis to general $c>1$. To justify the choice of $r_x = d(x, \NN_X(x,\frac{c+1}{2}z))$ in~\Cref{alg:OKMeans}, we temporarily assume $r_x:= d(x,\NN_X(x,\alpha z))$ for some $1<\alpha< c$ to be optimized later.

Recall that we fix an optimal set of centers $C^*$, and call the points in $X_z(C^*)$ ``true inliers'' and the points in $X\setminus X_z(C^*)$ ``true outliers''. Define $Y=O\cap X_z(C^*)$ as the set of true inliers removed by the algorithm by mistake, and $Z'=(X\setminus O) \cap (X\setminus X_z(C^*))$ as the set of true outliers included by the algorithm by mistake. We have $|Y|=|Z'|$, and again we assume that $|Y|\geq 1$. Let $T = \min_{x\in O} \{r_x\}$, i.e., the threshold separating the $r_x$ of $X\setminus O$ and $O$. 

The following two lemmas are analogous to~\Cref{lem:opt_lowerbound1} and ~\Cref{lem:opt_lowerbound2}.

\begin{lem}
\label{lem:opt_lowerbound1_}
    Suppose each optimal cluster has size at least $cz$, and $\min_{y\in Y}d(y,C^*)=\lambda T$ for some $\lambda\geq 0$, then $f_z(X,C^*) \geq |Y|T^2\cdot \max\{\lambda^2, (c-\alpha)(1-\lambda)^2\}$ when $\lambda\in[0,1]$, and $f_z(X,C^*) \geq |Y|T^2\cdot \lambda^2$ when $\lambda>1$.
\end{lem}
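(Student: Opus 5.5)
The argument follows the proof of \Cref{lem:opt_lowerbound1}, with $2z$ replaced by $\alpha z$ and $3z$ by $cz$.

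\textbf{First bound, valid for all $\lambda\ge 0$.} Every $y\in Y$ is a true inlier with $d(y,C^*)\ge\lambda T$. Hence
\[
f_z(X,C^*)\ \ge\ \sum_{y\in Y}d(y,C^*)^2\ \ge\ |Y|T^2\lambda^2 .
\]
This already gives the claim for $\lambda>1$. It also gives the $\lambda^2$ term of the maximum when $\lambda\in[0,1]$.

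\textbf{Second bound, for $\lambda\in[0,1]$.} Fix $y\in Y$ attaining $d(y,C^*)=\lambda T$.
\begin{itemize}
\item Since $y\in O$, we have $r_y\ge T$, so $y$ has fewer than $\alpha z$ points strictly inside radius $T$. Precisely, $|B_X^\circ(y,T)|\le \alpha z$; this count includes $y$ itself, matching the convention used for $c=3$.
\item The optimal cluster $\cluster_z(y,C^*)$ has at least $cz$ points. So at least $(c-\alpha)z$ of them satisfy $d(x,y)\ge T$.
\item For each such $x$, the triangle inequality gives
\[
d(x,\closest(y,C^*))\ \ge\ d(x,y)-d(y,\closest(y,C^*))\ \ge\ T-\lambda T=(1-\lambda)T\ \ge 0 .
\]
\item Each such $x$ is assigned to $\closest(y,C^*)$, because it lies in the same optimal cluster. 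Therefore $d(x,C^*)=d(x,\closest(y,C^*))$.
\item Summing over these points gives $f_z(X,C^*)\ge (c-\alpha)zT^2(1-\lambda)^2$.
\item Finally, $|Y|\le|O|=z$ yields $f_z(X,C^*)\ge |Y|T^2(c-\alpha)(1-\lambda)^2$.
\end{itemize}
Taking the maximum of the two bounds proves the statement for $\lambda\in[0,1]$.

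The only delicate step is the off-by-one counting in the first bullet: whether $B^\circ$ contains at most $\alpha z$ or $\alpha z-1$ points, given that $y$ counts itself and ties are broken arbitrarily. Either way, at least $(c-\alpha)z$ cluster points lie at distance $\ge T$ from $y$, which is all the bound needs. The other point to check is that $(c-\alpha)z$ need not be an integer; this is harmless because the count of far points is an integer at least $(c-\alpha)z$.
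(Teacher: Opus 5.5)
Your proof is correct and matches the paper's argument step for step: the $\lambda^2$ term comes from summing $d(y,C^*)^2$ over $Y$, and the $(c-\alpha)(1-\lambda)^2$ term comes from the at least $(c-\alpha)z$ points of $\cluster_z(y,C^*)$ lying outside $B_X^\circ(y,T)$, combined with the triangle inequality and $|Y|\le z$. Your explicit observation that these points share the center $\closest(y,C^*)$, so that $d(x,C^*)=d(x,\closest(y,C^*))$, spells out a step the paper leaves implicit.
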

\begin{proof}
    Since every $y\in Y$ satisfies $d(y,C^*)\geq \lambda T$, we have
    \[
    f_z(X,C^*) \geq f_z(Y,C^*) = \sum_{y\in Y}d(y,C^*)^2 \geq |Y|T^2\cdot \lambda^2 ~.
    \]
    When $\lambda\in[0,1]$, 
    fix a point $y\in Y$ such that $d(y,C^*) =\lambda T$. Since $y\in O$, we have $r_y \geq T$, which implies $|B_X^\circ(y,T)| \leq \alpha z$. By the assumption that each optimal cluster has size at least $cz$, at least $(c-\alpha)z$ points in $\cluster_z(y,C^*)$ are not in $B_X^\circ(y,T)$, and for each such point $x$, by the triangle inequality, we have
    \[
    \begin{aligned}
    d(x,\closest(y,C^*)) \geq &~ d(x,y) - d(y,\closest(y,C^*))\\
    \geq &~ T-\lambda T = T\cdot (1-\lambda) ~,
    \end{aligned}
    \]
    therefore,
    \[
    \begin{aligned}
    f_z(X,C^*) \geq &~ f_z(\cluster_z(y,C^*) \setminus B_X^\circ(y,T), C^*) \\
    \geq &~ (c-\alpha)zT^2\cdot (1-\lambda)^2 \geq |Y|T^2\cdot (c-\alpha)(1-\lambda)^2 ~.
    \end{aligned}
    \]
    The last inequality follows from the fact that $|Y| \leq |O|= z$. \qedhere
\end{proof}

\begin{lem}
\label{lem:opt_lowerbound2_}
    Suppose each optimal cluster has size at least $cz$, and $\max_{o\in Z'}d(o,C^*) = \delta T$ for some $\delta>1$, then $f_z(X,C^*) \geq |Y|T^2\cdot (\alpha-1)(\delta-1)^2$.
\end{lem}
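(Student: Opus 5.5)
We follow the proof of \Cref{lem:opt_lowerbound2} for $c=3$, replacing the count $2z$ with $\alpha z$. Fix $o=\arg\max_{o'\in Z'} d(o',C^*)$, so that $d(o,C^*)=\delta T$ with $\delta>1$. Since $o\in X\setminus O$, we have $r_o\le T$. By the definition $r_x=d(x,\NN_X(x,\alpha z))$, the $\alpha z$-th nearest neighbor of $o$ lies within distance $T$, so $|B_X(o,T)|\ge \alpha z$. (As in the $c=3$ case, we count neighbors in the same way the algorithm does; whether $o$ itself is counted changes nothing essential.)

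There are exactly $z$ true outliers in $X$. Hence the ball $B_X(o,T)$ contains at least $\alpha z - z=(\alpha-1)z$ true inliers, i.e., points of $X_z(C^*)$. For each such inlier $x$ we claim $d(x,C^*)\ge(\delta-1)T$. Otherwise, by the triangle inequality,
\[
d(o,C^*)\le d(o,x)+d(x,C^*)<T+(\delta-1)T=\delta T,
\]
which contradicts the choice of $o$.

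Summing the squared distances over these inliers, all of which contribute to the robust cost of $C^*$, gives
\[
f_z(X,C^*)\ \ge \sum_{x\in B_X(o,T)\cap X_z(C^*)} d(x,C^*)^2\ \ge\ (\alpha-1)z\,T^2(\delta-1)^2 .
\]
Since $|Y|\le |O|=z$, this is at least $|Y|T^2(\alpha-1)(\delta-1)^2$, which is the claimed bound.

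No step is a real obstacle. The one point to check is the counting: that $r_o\le T$ gives at least $\alpha z$ points in the closed ball, and that removing the at most $z$ true outliers leaves at least $(\alpha-1)z$ inliers. This needs $\alpha>1$, which holds because $\alpha\in(1,c)$. Notably, the cluster-size assumption is not used in this lemma. Only the KNN radius bound and the fact that there are exactly $z$ outliers matter.
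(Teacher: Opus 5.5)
Your proposal is correct and follows essentially the same argument as the paper. In both, the closed ball $B_X(o,T)$ around the farthest included outlier contains at least $(\alpha-1)z$ true inliers, the triangle inequality forces each of them to satisfy $d(x,C^*)\ge(\delta-1)T$, and $|Y|\le z$ gives the final bound; your remark that the cluster-size assumption is not needed here also matches the paper's proof, which never invokes it.
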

\begin{proof}
    Let $o = \arg\max_{o'\in Z'}\{d(o',C^*)\}$, so $d(o,C^*)=\delta T$. Since $o\in X\setminus O$, we have $r_o \leq T$, which implies $|B_X(o,T)|\geq \alpha z$. Therefore, since there are at most $z$ outliers, there must be at least $(\alpha-1)z$ true inliers in $B_X(o,T)$. For each such inlier $x$, we must have $d(x,C^*) \geq (\delta-1)T$, otherwise there would be the following contradiction: 
    \[
    d(o,C^*) \leq d(o,x)+d(x,C^*) < T + (\delta-1)T = \delta T ~.
    \]
    Thus, we must have
    \[
    \begin{aligned}
    f_z(X,C^*) \geq &~ \sum_{x\in B_X(o,T)\cap X_z(C^*)} d(x,C^*)^2
    \geq (\alpha-1)zT^2 \cdot (\delta-1)^2 \geq |Y|T^2\cdot (\alpha-1)(\delta-1)^2 ~. 
    \end{aligned}
    \]
\end{proof}

The following lemma is analogous to~\Cref{lem:cost_upperbound1}, but in the proof we parameterize the thresholds between the three cases to obtain a tight analysis.

\begin{lem}
\label{lem:cost_upperbound1_}
    Suppose each optimal cluster has size at least $cz$, then
    \[
    f(X\setminus O) \leq f_z(X,C^*)\cdot \frac{(1+\sqrt{(c-1)/2})^2}{(c-1)/2}(x^*)^2 ~.
    \]
    where $x^*$ is the unique real solution greater than $1$ to the following equation:
    \[
    \left[\left(1 + \sqrt{\frac{c-1}{2}}\right)^2x^2 - \frac{c-1}{2}\right](x-1)^2 - 2x+1=0 ~.
    \]
\end{lem}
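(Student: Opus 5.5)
We follow the three-case structure of \Cref{lem:cost_upperbound1}, but with the case thresholds left as free parameters. Throughout we use $r_x = d(x,\NN_X(x,\alpha z))$ with $\alpha=\frac{c+1}{2}$. Then $c-\alpha=\alpha-1=\frac{c-1}{2}=:a$, so the two lower bounds of \Cref{lem:opt_lowerbound1_} and \Cref{lem:opt_lowerbound2_} carry the same constant $a$.

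The basic inequality is, as before,
\[
f(X\setminus O)\le f(X\setminus O,C^*)\le f_z(X,C^*)+f(Z',C^*)-f(Y,C^*).
\]
Let $\delta T=\max_{o\in Z'}d(o,C^*)$ and $\lambda T=\min_{y\in Y}d(y,C^*)$. Since $|Y|=|Z'|$, the extra cost is at most $|Y|T^2(\delta^2-\lambda^2)$. Dividing by a lower bound $|Y|T^2 L$ on $f_z(X,C^*)$ gives ratio $1+(\delta^2-\lambda^2)/L$. Here $L$ may be taken as the maximum of whichever bounds apply:
\begin{itemize}
\item $\lambda^2$, always;
\item $a(1-\lambda)^2$, if $\lambda\le 1$;
\item $a(\delta-1)^2$, if $\delta>1$.
\end{itemize}
We introduce a threshold $\delta_0>1$ on $\delta$ and split into three cases, mirroring the $c=3$ proof.

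\textbf{Case 1: $\delta\le\delta_0$.} Bound $\delta\le\delta_0$ and use $L=\max\{\lambda^2,a(1-\lambda)^2\}$. On $\lambda\le 1$ the worst $\lambda$ is the balance point $\lambda=\sqrt a(1-\lambda)$, i.e.\ $\lambda_0=\frac{\sqrt a}{1+\sqrt a}$. The resulting ratio is $\delta_0^2/\lambda_0^2=\frac{(1+\sqrt a)^2}{a}\delta_0^2$. We must check that the expression decreases away from $\lambda_0$ on both sides. For $\lambda>1$ we have $L=\lambda^2$, so the ratio is $\delta_0^2/\lambda^2<\delta_0^2$, which is smaller. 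Setting $\delta_0=x^*$ gives the claimed value.

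\textbf{Case 2: $\delta>\delta_0$ and $\lambda\ge\delta-1$.} Use $L=\max\{\lambda^2,a(\delta-1)^2\}$. The ratio $1+\frac{\delta^2-\lambda^2}{\max\{\lambda^2,a(\delta-1)^2\}}$ is decreasing in $\lambda$, so the worst case is $\lambda=\delta-1$, with $L=\max\{1,a\}(\delta-1)^2$. Substituting gives a bound like $1+\frac{2\delta-1}{a(\delta-1)^2}$ (or the analogue with $L=(\delta-1)^2$ when $a<1$). This is decreasing in $\delta>1$, so it is maximized as $\delta\to\delta_0$.

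\textbf{Case 3: $\delta>\delta_0$ and $\lambda<\delta-1$.} Use $L=\max\{\lambda^2,a(1-\lambda)^2,a(\delta-1)^2\}$. We show the ratio is bounded by the Case 1 and Case 2 values: for fixed $\delta$, the worst $\lambda$ either equals $\lambda_0$, giving at most $\frac{\delta^2}{a(\delta-1)^2}$, which is decreasing in $\delta$, or sits at a boundary already covered.

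The key step is choosing $\delta_0$ so that the Case 1 and Case 2 bounds coincide:
\[
\frac{(1+\sqrt a)^2}{a}\delta_0^2 = 1+\frac{2\delta_0-1}{a(\delta_0-1)^2}.
\]
Multiplying by $a(\delta_0-1)^2$ and rearranging gives exactly
\[
\left[(1+\sqrt a)^2\delta_0^2-a\right](\delta_0-1)^2-2\delta_0+1=0,
\]
so $\delta_0=x^*$. For uniqueness of the root above $1$, note the left side equals $-1<0$ at $x=1$, tends to $+\infty$, and is monotone after its sign change; a short derivative argument suffices.

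\textbf{Main obstacle.} The delicate part is Case 3. We must verify rigorously that the three-way maximum in the denominator never produces a worse ratio than the two balanced cases. The worry is the subregion where $a(1-\lambda)^2$ and $a(\delta-1)^2$ cross. We would handle it by fixing $\delta$ and analysing the piecewise function of $\lambda$ on each region where one term dominates. On each piece the ratio is monotone or has a single critical point, which we compare against the Case 1 and Case 2 values, using $\delta>x^*$.

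Finally, applying the $\beta$-approximation algorithm to $X\setminus O$ and recomputing inliers gives \Cref{thm:main_result}, exactly as in the $c=3$ argument.
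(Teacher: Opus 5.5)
Your proposal follows the paper's proof. You set $\alpha=\frac{c+1}{2}$ so that $c-\alpha=\alpha-1=a$, use the same three cases split at a threshold on $\delta$, and choose $\delta_0=x^*$ by equating the Case~1 value $U_1(\delta)=\frac{(1+\sqrt a)^2}{a}\delta^2$ with the Case~2 value $U_2(\delta)=1+\frac{2\delta-1}{a(\delta-1)^2}$. Cases~1 and~2 are fine.

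\textbf{The gap is Case~3.} The paper's proof only asserts that Case~3 is ``dominated by Case~2'', and your sketch does not establish it either.
\begin{itemize}
\item Your bound $\frac{\delta^2}{a(\delta-1)^2}$ at $\lambda=\lambda_0$ tends to $\frac{(x^*)^2}{a(x^*-1)^2}$ as $\delta\downarrow x^*$. This is at most $U_2(x^*)$ only when $a\ge 1$.
\item The point $\lambda=2-\delta$ is not a boundary ``already covered''. There $\max\{a(1-\lambda)^2,a(\delta-1)^2\}$ gives the bound $1+\frac{4}{a(\delta-1)}$, which exceeds $U_2(\delta)$ whenever $\delta>\frac32$. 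So it is controlled only if $x^*\le\frac32$.
\end{itemize}

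For $1<c<3$ the step cannot be repaired with the three lower bounds at all. There $x^*\in\bigl(\frac32,\,1+\frac{1}{1+\sqrt a}\bigr)$: the left end follows from the factorization below, and at the right end $U_1-U_2=\frac{1-a}{a}>0$. Take any $\delta\in\bigl(x^*,1+\frac{1}{1+\sqrt a}\bigr]$ and $\lambda=\lambda_0$. Then $\lambda_0<\frac12<\delta-1$, so the point lies in Case~3. Each of $\lambda^2$, $a(1-\lambda)^2$, $a(\delta-1)^2$ is at most $\lambda_0^2$, so the ratio is exactly $U_1(\delta)>U_1(x^*)=\Phi(c)$. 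For $c=2$ and $\delta=1.58$ this is about $14.55>14.30$; the best these bounds give is $(2+\sqrt a)^2/a\approx 14.66$. So neither your argument nor the paper's proves the stated constant for $c<3$.

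\textbf{For $c\ge 3$ the missing ingredient is $x^*\le\frac32$.} It follows from
\[
U_1\bigl(\tfrac32\bigr)-U_2\bigl(\tfrac32\bigr)=\tfrac14\Bigl(1-\tfrac{1}{\sqrt a}\Bigr)\Bigl(5+\tfrac{23}{\sqrt a}\Bigr)\ge 0 \quad\text{for } a\ge 1,
\]
together with $U_1$ increasing and $U_2$ decreasing on $(1,\infty)$. With this, Case~3 closes without using the $\lambda^2$ bound.
\begin{itemize}
\item If $\delta\le\frac32$: every $\lambda<\delta-1$ has $1-\lambda>2-\delta\ge\delta-1$. With $L=a(1-\lambda)^2$ the ratio increases in $\lambda$, so it is at most $1+\frac{2\delta-1}{a(2-\delta)^2}\le U_2(\delta)<U_2(x^*)$.
\item If $\frac32<\delta<2$: splitting at $\lambda=2-\delta$ gives at most $1+\frac{4}{a(\delta-1)}<1+\frac8a=U_2(\frac32)\le U_2(x^*)$.
\item If $\delta\ge 2$: $L=a(\delta-1)^2$ alone gives at most $1+\frac4a<U_2(x^*)$ for every $\lambda\ge 0$.
\end{itemize}
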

\begin{proof}
    We distinguish between the following three cases: let $\delta^*>1$,

    \paragraph{Case 1} If $\max_{o\in Z'}d(o,C^*) \leq \delta^*T$, then by~\Cref{lem:opt_lowerbound1_}, assuming $\min_{y\in Y}d(y,C^*) = \lambda T$ for some $\lambda\geq 0$, then we have
    \[
    \begin{aligned}
        f(X\setminus O,C^*)
        \leq &~ f_z(X,C^*) + f(Z',C^*) - f(Y,C^*)\\
        \leq &~ f_z(X,C^*) + |Z'|\cdot (\delta^*T)^2 - |Y|\cdot (\lambda T)^2\\
        \leq &~ \begin{cases}
        f_z(X,C^*)\cdot \left(1 + \frac{(\delta^*)^2-\lambda^2}{\max\{\lambda^2, (c-\alpha)(1-\lambda)^2\}}\right), \text{if }0\leq \lambda\leq 1\\
        f_z(X,C^*) \cdot (\delta^*)^2/\lambda^2, \text{if }\lambda >1
        \end{cases}~.
    \end{aligned}
    \]
    
    \paragraph{Case 2} If $\max_{o\in Z'}d(o,C^*) = \delta T$ for some $\delta > \delta^*$, and $d(y,C^*) \geq (\delta-1)T$ for all $y\in Y$, then by~\Cref{lem:opt_lowerbound1_}, we have
    \[
    \begin{aligned}
        f(X\setminus O,C^*)
        \leq &~ f_z(X,C^*) + f(Z',C^*) - f(Y,C^*)\\
        \leq &~ f_z(X,C^*) + |Z'|\cdot (\delta T)^2 - |Y|\cdot ((\delta-1)T)^2\\
        \leq &~ f_z(X,C^*)\cdot \left(1 + \frac{2\delta-1}{\max\{1, \alpha-1\}\cdot (\delta-1)^2}\right)\\
        \leq &~ f_z(X,C^*)\cdot \left(1 + \frac{2\delta^*-1}{\max\{1, \alpha-1\}\cdot (\delta^*-1)^2}\right) ~.
    \end{aligned}
    \]
    The last inequality is due to the monotonicity of $\frac{2\delta-1}{(\delta-1)^2}$.
    
    \paragraph{Case 3} If $\max_{o\in Z'}d(o,C^*) = \delta T$ for some $\delta > \delta^*$, and $\min_{y\in Y}d(y,C^*) =\lambda T$ for some $\lambda < \delta-1$, then by~\Cref{lem:opt_lowerbound1_} and~\Cref{lem:opt_lowerbound2_}, we have
    \[
    \begin{aligned}
        f(X\setminus O,C^*)
        \leq &~ f_z(X,C^*) + f(Z',C^*) - f(Y,C^*)\\
        \leq &~ f_z(X,C^*) + |Z'|\cdot (\delta T)^2 - |Y|\cdot (\lambda T)^2\\
        \leq &~ \begin{cases}
        f_z(X,C^*)\cdot \left(1 + \frac{\delta^2-\lambda^2}{\max\{(c-\alpha)(1-\lambda)^2, (\alpha-1)(\delta-1)^2\}}\right), \text{if }0\leq \lambda\leq 1\\
        f_z(X,C^*) \cdot \left(1 + \frac{\delta^2-\lambda^2}{\max\{\lambda^2, (\alpha-1)(\delta-1)^2\}}\right), \text{if }\lambda >1
        \end{cases}~.
    \end{aligned}
    \]
    Here in the case of $0\leq \lambda\leq 1$, we omit the $\lambda^2$ term in the denominator as indicated by~\Cref{lem:opt_lowerbound1_}. This simplifies the analysis and does not affect the optimality of the derived bound.

    It remains to optimize over the choice of $\alpha$ and $\delta^*$ such that the maximum over the upper bounds in the three cases is minimized. It can be verified that the optimal choice is $\alpha = \frac{c+1}{2}$, then the upper bound in Case 1 is maximized at $\lambda^* = \frac{\sqrt{(c-1)/2}}{1 + \sqrt{(c-1)/2}}$ and takes value $U_1(\delta^*)=\frac{(1+\sqrt{(c-1)/2})^2}{(c-1)/2}(\delta^*)^2$; the upper bound in Case 2 is $U_2(\delta^*)=1 + \frac{2\delta^*-1}{(c-1)/2\cdot (\delta^*-1)^2}$; the upper bound in Case 3 turns out to be dominated by Case 2. When $\delta^*>1$, $U_1(\delta^*)$ is increasing in $\delta^*$ while $U_2(\delta^*)$ is decreasing in $\delta^*$, so the optimal choice of $\delta^*$ is given by $U_1(\delta^*)=U_2(\delta^*)$, which simplifies to 
    \[
    \left[\left(1 + \sqrt{\frac{c-1}{2}}\right)^2(\delta^*)^2 - \frac{c-1}{2}\right](\delta^*-1)^2 - (2\delta^*-1)=0 ~.
    \]
    Then in all three cases we have $f(X\setminus O,C^*) \leq f_z(X,C^*)\cdot U_1(\delta^*)$.
\end{proof}

\subsection{Proof of~\Cref{thm:main_result2}}
\label{app:proof_main_result2}

To justify using the sum of all $K$-nearest neighbors for $K\in[z+1,cz]$ instead of another interval, we further parameterize the algorithm: suppose we use the sum of distances to the $[lz+1,rz]$-th neighbors as a score (such that $1\leq l\leq r\leq c$). Assume $(r-l)zT$ is the threshold that separates the scores of $X\setminus O$ and $O$. We have the following two lower bounds on the cost of the optimal solution, which are analogous to~\Cref{lem:opt_lowerbound1_} and~\Cref{lem:opt_lowerbound2_}. The only difference is that the bounds are now stronger since we use a sum over multiple neighbors as the outlier score.

\begin{lem}
\label{lem:opt_lowerbound3_}
    Suppose each optimal cluster has size at least $cz$, and $\min_{y\in Y}d(y,C^*) = \lambda T$ for some $\lambda\geq 0$, then we have $f_z(X,C^*) \geq |Y|T^2\cdot \max\{\lambda^2, (c-l)(1-\lambda)^2\}$ when $\lambda\in[0,1]$, and $f_z(X,C^*) \geq |Y|T^2\cdot \lambda^2$ when $\lambda>1$.
\end{lem}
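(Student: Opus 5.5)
The plan mirrors the proof of \Cref{lem:opt_lowerbound1_}, with the $K$-th neighbor radius replaced by the score $s_x=\sum_{i=lz+1}^{rz} d(x,\NN_X(x,i))$. Here $T$ is normalized so that the threshold on scores is $(r-l)zT$. In particular, every $y\in O$ satisfies $s_y\geq (r-l)zT$.

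\textbf{The bound $\lambda^2$.} This part is identical to before. Every $y\in Y$ is a true inlier with $d(y,C^*)\geq \lambda T$, so
\[
f_z(X,C^*)\geq \sum_{y\in Y} d(y,C^*)^2\geq |Y|T^2\lambda^2,
\]
which settles the case $\lambda>1$.

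\textbf{The bound $(c-l)(1-\lambda)^2$ for $\lambda\in[0,1]$.} Fix $y\in Y$ with $d(y,C^*)=\lambda T$, let $c_y=\closest(y,C^*)$, and let $P=\cluster_z(y,C^*)$, so $|P|\geq cz$. For each $x\in P$, the triangle inequality gives
\[
d(x,c_y)\geq d(x,y)-\lambda T,
\]
hence $d(x,c_y)\geq \max\{0,d(x,y)-\lambda T\}$. The task is to lower bound $\sum_{x\in P}\max\{0,d(x,y)-\lambda T\}^2$ using the score condition.

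\begin{itemize}
\item Sort the points of $P$ by distance to $y$, giving $d_1\leq d_2\leq\cdots$. Since $P\subseteq X$, the $i$-th nearest neighbor of $y$ in $X$ is at distance at most $d_i$.
\item Therefore $(r-l)zT\leq s_y\leq \sum_{i=lz+1}^{rz} d_i$. The average of $d_i$ over indices $i\in[lz+1,rz]$ is thus at least $T$.
\item Because the $d_i$ are nondecreasing, the points with index $i\geq lz+1$ in $P$ (at least $(c-l)z$ of them) also have average distance at least $T$. Their tail beyond index $rz$ only has larger values.
\item By convexity of $t\mapsto \max\{0,t-\lambda T\}^2$ and Jensen's inequality, the sum over these $(c-l)z$ points is at least $(c-l)z\,(T-\lambda T)^2$.
\item Finally, $z\geq |Y|$ because $|Y|\leq|O|=z$.
\end{itemize}

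The main obstacle is the averaging step. A single index no longer guarantees distance $\geq T$; only the average over the window does. One has to check carefully that averaging over $i\geq lz+1$ to the end of $P$ still yields mean at least $T$. It does, since the tail $i>rz$ consists of values $\geq d_{rz}$, which is at least the window average. Jensen then converts the averaged distance into a squared-cost bound. A minor point is handling ties in $\NN$, which does not matter because only the sorted distance multiset is used.
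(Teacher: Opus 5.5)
Your proof is correct and follows essentially the paper's argument: compare $y$'s neighbors in $X$ with its neighbors inside $\cluster_z(y,C^*)$, use the score threshold to get average distance at least $T$ over the window $[lz+1,rz]$, apply the triangle inequality with $d(y,C^*)=\lambda T$ and then Jensen, and finish with $|Y|\le z$. The only difference is bookkeeping. The paper applies Jensen to the window and bounds the indices $rz+1,\dots,cz$ pointwise (each at distance at least $T$ from $y$), whereas you fold that tail into a single average via $d_{rz}\ge T$, which also spells out a step the paper only sketches.
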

\begin{proof}
    Since every $y\in Y$ satisfies $d(y,C^*) \geq \lambda T$, we have
    \[
    f_z(X,C^*) \geq f_z(Y,C^*) = \sum_{y\in Y}d(y,C^*)^2 \geq |Y|T^2\cdot \lambda^2 ~.
    \]

    When $\lambda\in[0,1]$, fix a point $y\in Y$ such that $d(y,C^*) = \lambda T$. Let $x_i = \NN_{\cluster_z(y,C^*)}(y,lz+i)$. Using a similar argument as in the proof of~\Cref{lem:opt_lowerbound1_}, each of the $[rz+1,cz]$-th nearest neighbors of $y$ is at distance at least $T\cdot |1-\lambda|$ from $C^*$, therefore,
    \[
    \sum_{i=(r-l)z+1}^{(c-l)z}d(x_i,C^*)^2 \geq (c-r)zT^2\cdot (1-\lambda)^2 ~.
    \]
    
    Further, by the assumption that $y\in O$, we have $\sum_{i=1}^{(r-l)z}d(x_i,y) \geq (r-l)zT$. Then, by the triangle inequality and Jensen's inequality, we have
    \[
    \begin{aligned}
    \sum_{i=1}^{(r-l)z} d(x_i,C^*)^2 \geq &~ \sum_{i=1}^{(r-l)z} (d(x_i,y)-d(y,C^*))^2 \\
    \geq &~ (r-l)z\cdot \left(\frac{1}{(r-l)z}\sum_{i=1}^{(r-l)z} (d(x_i,y)-d(y,C^*))\right)^2\\
    \geq &~ (r-l)z\cdot (T-d(y,C^*))^2\\
    \geq &~ (r-l)zT^2\cdot \left(1-\lambda\right)^2\\
    \end{aligned}
    \]

    Therefore, 
    \[
    \begin{aligned}
    f_z(X,C^*) \geq &~ \sum_{i=1}^{(c-l)z} d(x_i,C^*)^2 = \sum_{i=1}^{(r-l)z} d(x_i,C^*)^2 + \sum_{i=(r-l)z+1}^{(c-l)z}d(x_i,C^*)^2\\
    \geq &~ (c-l)zT^2 \cdot (1-\lambda)^2 \geq |Y|T^2\cdot (c-l)(1-\lambda)^2 ~.
    \end{aligned}
    \]
    The last inequality follows from the fact that $|Y| \leq |O|= z$. \qedhere
\end{proof}

\begin{lem}
\label{lem:opt_lowerbound4_}
    Suppose each optimal cluster has size at least $cz$, and $\max_{o\in Z'}d(o,C^*) = \delta T$ for some $\delta>1$, then $f_z(X,C^*) \geq |Y|T^2\cdot (r-l)(\delta-1)^2$.
\end{lem}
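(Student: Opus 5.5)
The plan is to mirror the proof of \Cref{lem:opt_lowerbound2_}, replacing the single-radius argument by an averaging argument over the score $s_o$. Let $o=\arg\max_{o'\in Z'} d(o',C^*)$, so $d(o,C^*)=\delta T$. Since $o\in X\setminus O$, its score is at most the threshold:
\[
\sum_{i=lz+1}^{rz} d(o,\NN_X(o,i)) \leq (r-l)zT .
\]
Because there are at most $z$ true outliers in total and $l\geq 1$, at most $z$ of the neighbors ranked $lz+1,\dots,rz$ are true outliers. So I would discard them and keep only the true inliers among these ranks.

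The natural next step is a reindexing argument, which is cleaner than discarding directly. Let $w_1,w_2,\dots$ be the true inliers in $X\setminus\{o\}$ ordered by distance to $o$. Among the first $lz+j$ neighbors of $o$, at most $z$ are outliers. Hence the $j$-th inlier $w_j$ satisfies $d(o,w_j)\leq d(o,\NN_X(o,z+j))\leq d(o,\NN_X(o,lz+j))$, using $l\geq1$. Summing over $j=1,\dots,(r-l)z$ gives
\[
\sum_{j=1}^{(r-l)z} d(o,w_j) \leq \sum_{i=lz+1}^{rz} d(o,\NN_X(o,i)) \leq (r-l)zT .
\]
So we obtain $(r-l)z$ distinct true inliers whose average distance to $o$ is at most $T$.

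Next, for each such $w_j$, the triangle inequality gives $d(w_j,C^*)\geq d(o,C^*)-d(o,w_j)=\delta T-d(o,w_j)$. This quantity may be negative for individual $j$, so I would use $d(w_j,C^*)^2\geq (\max\{0,\delta T-d(o,w_j)\})^2$. The map $t\mapsto(\max\{0,t\})^2$ is convex, so Jensen's inequality, exactly as in \Cref{lem:opt_lowerbound3_}, yields
\[
\sum_{j} d(w_j,C^*)^2 \geq (r-l)z\bigl(\max\{0,\delta T - \tfrac{1}{(r-l)z}\textstyle\sum_j d(o,w_j)\}\bigr)^2 \geq (r-l)zT^2(\delta-1)^2 ,
\]
where the last step uses $\delta>1$.

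Finally, since the $w_j$ are true inliers, $f_z(X,C^*)\geq \sum_j d(w_j,C^*)^2$. Combined with $|Y|\leq z$, this gives the claim.

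The main obstacle is the counting step. We must make sure that discarding outliers still leaves $(r-l)z$ inliers with average distance at most $T$, not just fewer points with some weaker control. The reindexing comparison $d(o,w_j)\leq d(o,\NN_X(o,lz+j))$ handles this, and it is also where $l\geq1$ is used. The case $r=l$ is trivial.
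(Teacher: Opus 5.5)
Your proof is correct and follows essentially the same argument as the paper: take the farthest mistakenly included outlier $o$, use the fact that there are at most $z$ true outliers to compare inlier distances against the score $s_o\le (r-l)zT$, then apply the triangle inequality and Jensen's inequality to the clipped square. The only difference is the index shift. You use the $1$st through $(r-l)z$-th closest inliers via $d(o,w_j)\le d(o,\NN_X(o,z+j))\le d(o,\NN_X(o,lz+j))$, while the paper uses the $((l-1)z+1)$-th through $((r-1)z)$-th closest inliers; both comparisons are valid. Your Jensen step, applied directly to $\delta T - d(o,w_j)$, is also a cleaner write-up of the paper's chain.
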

\begin{proof}
    Let $o = \arg\max_{o'\in Z'}\{d(o',C^*)\}$, so $d(o,C^*)=\delta T$. Let $x_i$ denote the $i$-th closest true inlier to $o$, then since there are at most $z$ true outliers, we have
    \[
    \sum_{i=(l-1)z+1}^{(r-1)z} d(x_i,o) \leq \sum_{i=lz+1}^{rz}d(o, \NN_X(o,i)) \leq (r-l)zT ~.
    \]
    
    Let $C_i = \closest(x_i,C^*)$, then by the triangle inequality, we have
    \[
    d(C_i,x_i)+d(x_i,o) \geq d(C_i,o),\ \forall i\in[cz] ~,
    \]
    therefore, by Jensen's inequality, 
    \[
    \begin{aligned}
    \sum_{i=(l-1)z+1}^{(r-1)z} d(C_i,x_i)^2 
    \geq &~ \sum_{i=(l-1)z+1}^{(r-1)z} (\max\{d(C_i,o)-d(x_i,o), 0\})^2\\
    \geq &~ (r-l)z\cdot \left(\max\left\{\frac{1}{(r-l)z}\sum_{i=(l-1)z+1}^{(r-1)z}(d(C_i,o)-d(x_i,o)), 0\right\}\right)^2\\
    \geq &~ (r-l)z\cdot \left(\max\left\{d(o,C^*)-d(x_i,o), 0\right\}\right)^2\\
    \geq &~ (r-l)z\cdot \left(\delta T-T\right)^2=  (r-l)zT^2\cdot (\delta-1)^2 ~.
    \end{aligned} 
    \]
    
    Therefore, 
    \[
    \begin{aligned}
    f_z(X,C^*) \geq &~ \sum_{i=(l-1)z+1}^{(r-1)z} d(C_i,x_i)^2 \geq (r-l)zT^2\cdot (\delta-1)^2 \geq |Y|T^2\cdot (r-l)(\delta-1)^2 ~.
    \end{aligned}
    \]
\end{proof}

\begin{lem}
\label{lem:cost_upperbound2_}
    Suppose each optimal cluster has size at least $cz$, then
    \[
    f(X\setminus O) \leq f_z(X,C^*)\cdot \frac{(1+\sqrt{c-1})^2}{c-1}(x^*)^2
    \]
    where $x^*$ is the unique real root greater than $1$ of the following equation:
    \[
    \left[\left(1 + \sqrt{c-1}\right)^2x^2 - (c-1)\right](x-1)^2 - 2x+1=0 ~.
    \]
\end{lem}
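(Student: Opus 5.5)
The argument mirrors \Cref{lem:cost_upperbound1_}, with \Cref{lem:opt_lowerbound3_} and \Cref{lem:opt_lowerbound4_} in place of \Cref{lem:opt_lowerbound1_} and \Cref{lem:opt_lowerbound2_}. The starting point is the exchange inequality
\[
f(X\setminus O,C^*) \leq f_z(X,C^*) + f(Z',C^*) - f(Y,C^*),
\]
which uses $|Y|=|Z'|$. Write $\delta T=\max_{o\in Z'}d(o,C^*)$ and $\lambda T=\min_{y\in Y}d(y,C^*)$. I then fix a threshold $\delta^*>1$ and split into the same three cases as before.

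\textbf{Case 1} ($\delta\le\delta^*$). Here $f(Z',C^*)\le|Y|(\delta^*T)^2$. Combining with \Cref{lem:opt_lowerbound3_}, the ratio is bounded by
\[
1+\frac{(\delta^*)^2-\lambda^2}{\max\{\lambda^2,(c-l)(1-\lambda)^2\}}
\]
for $\lambda\le1$, and by $(\delta^*)^2/\lambda^2$ for $\lambda>1$.

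\textbf{Case 2} ($\delta>\delta^*$ and every $y\in Y$ has $d(y,C^*)\ge(\delta-1)T$). The bound is
\[
1+\frac{2\delta-1}{\max\{1,r-l\}(\delta-1)^2}.
\]
Since $\frac{2\delta-1}{(\delta-1)^2}$ is decreasing, this is at most its value at $\delta^*$.

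\textbf{Case 3} ($\delta>\delta^*$, $\lambda<\delta-1$). I use the denominator $\max\{(c-l)(1-\lambda)^2,(r-l)(\delta-1)^2\}$, or $\max\{\lambda^2,(r-l)(\delta-1)^2\}$ when $\lambda>1$.

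Next comes the parameter choice. The Case 1 denominator improves as $l$ decreases, and the Case 2 denominator improves as $r-l$ increases. Both therefore push to $l=1$, $r=c$, which matches OKMeans2 and makes both constants equal to $c-1$. This is exactly the substitution $(c-1)/2\mapsto c-1$ relative to \Cref{lem:cost_upperbound1_}, so the computations carry over verbatim with $a:=c-1$.

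For Case 1 with $\lambda\le1$, I maximize
\[
\frac{(\delta^*)^2+a(1-\lambda)^2-\lambda^2}{\max\{\lambda^2,a(1-\lambda)^2\}}.
\]
The maximum is at the crossover $\lambda^2=a(1-\lambda)^2$, i.e.\ $\lambda^*=\sqrt a/(1+\sqrt a)$, giving $U_1=(\delta^*)^2(1+\sqrt a)^2/a$. Away from the crossover one branch is monotone toward it. The $\lambda>1$ branch gives $(\delta^*)^2<U_1$. Case 2 gives $U_2=1+\frac{2\delta^*-1}{a(\delta^*-1)^2}$. On $\delta^*>1$, $U_1$ is increasing and $U_2$ is decreasing from $+\infty$, so they cross exactly once. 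Clearing denominators in $U_1=U_2$ gives
\[
\bigl[(1+\sqrt a)^2x^2-a\bigr](x-1)^2-(2x-1)=0,
\]
and the root $x^*>1$ is unique by this monotonicity.

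The main obstacle is showing that Case 3 is dominated by $\max\{U_1,U_2\}$. For $\lambda>1$, the numerator $\delta^2-\lambda^2$ over $\max\{\lambda^2,a(\delta-1)^2\}$ is at most the Case 2 bound, since the numerator decreases in $\lambda$. For $\lambda\le1$, I would argue as follows. For fixed $\delta$, the expression
\[
\frac{\delta^2-\lambda^2}{\max\{a(1-\lambda)^2,a(\delta-1)^2\}}
\]
is maximized where $1-\lambda\le\delta-1$, i.e.\ $\lambda\ge2-\delta$. On that range the denominator is $a(\delta-1)^2$ and the numerator is largest at $\lambda=\max\{0,2-\delta\}$. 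If $\delta\ge2$, this gives $\delta^2/(a(\delta-1)^2)$. Compared with $(2\delta-1)/(a(\delta-1)^2)$ this exceeds by $(\delta-1)^2/(a(\delta-1)^2)=1/a$, so I would need to absorb it. The plan is to show that $1+\delta^2/(a(\delta-1)^2)\le U_1(\delta^*)$ for $\delta\ge\max\{2,\delta^*\}$, using that its value at $\delta=2$ is $1+4/a$, which is below $(1+\sqrt a)^2/a\cdot(\delta^*)^2$. For $\delta^*<\delta<2$, I bound $4\delta-4$ against $2\delta-1$ and then check numerically or algebraically that the resulting expression stays below $U_1(\delta^*)$. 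I expect this verification, as in the original proof's remark that Case 3 ``turns out to be dominated,'' to be the delicate part.

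Finally, in all cases $f(X\setminus O)\le f(X\setminus O,C^*)\le U_1(x^*)f_z(X,C^*)$, which is the claimed bound.
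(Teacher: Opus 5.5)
Your skeleton (the exchange identity, the three cases with threshold $\delta^*$, the choice $l=1$, $r=c$, and balancing $U_1(\delta^*)=U_2(\delta^*)$) is exactly the paper's. You are also right that Case~3 is not pointwise dominated by Case~2; the paper simply asserts that it is. However, the step you defer is precisely where the argument is incomplete, and your sketch of it has three errors.
\begin{itemize}
\item For $\lambda>1$ the monotonicity runs the wrong way. In Case~3 you have $\lambda<\delta-1$, so $\delta^2-\lambda^2>2\delta-1$: the numerator is larger than in Case~2, not smaller.
\item For $\lambda\le 1$ the maximizer need not satisfy $\lambda\ge 2-\delta$. If $\delta\le 3/2$, then $\lambda<\delta-1\le 2-\delta$ throughout, so you stay on the $a(1-\lambda)^2$ branch. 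The supremum $\frac{2\delta-1}{a(2-\delta)^2}$ is then approached as $\lambda\to\delta-1$. This is exactly the regime next to the threshold when $c>2$, since then $x^*<3/2$.
\item The inequality $1+4/a<U_1(\delta^*)$ that you plan to use for $\delta\ge 2$ is false for small $a$. At $a=0.01$ the left side is $401$, while $U_1(x^*)\approx 398$.
\end{itemize}

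More fundamentally, for $1<c<2$ no verification of this kind can succeed. Put $s=\sqrt{a}<1$, $\lambda^*=s/(1+s)$ and $F(x)=[(1+s)^2x^2-s^2](x-1)^2-(2x-1)$. Then $F(1+\lambda^*)=(1+3s)(s-1)<0$ and $F(1+\tfrac{1}{1+s})=\tfrac{1-s}{1+s}>0$, so $1+\lambda^*<x^*<1+\tfrac{1}{1+s}$. Now take $\delta\in(x^*,1+\tfrac{1}{1+s}]$ and $\lambda=\lambda^*<\delta-1$. Then $\lambda^2=a(1-\lambda)^2=(\lambda^*)^2\ge a(\delta-1)^2$, so the Case~3 bound from the two lower-bound lemmas equals $\delta^2/(\lambda^*)^2=U_1(\delta)>U_1(x^*)$. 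Hence this three-case argument, yours and the paper's alike, gives the stated constant only for $c\ge 2$.

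In that range the missing step can be supplied as follows. Write $M=\max\{\lambda^2,a(1-\lambda)^2,a(\delta-1)^2\}$ for the Case~3 denominator.
\begin{itemize}
\item For $\delta\le 3/2$: use $M\ge a(1-\lambda)^2$. The ratio is increasing in $\lambda$, so let $\lambda\to\delta-1$. This gives at most $1+\frac{2\delta-1}{a(2-\delta)^2}$, which is $\le U_2(\delta)$ because $2-\delta\ge\delta-1$.
\item For $3/2<\delta\le 2$: split at $\lambda=2-\delta$. This gives at most $1+\frac{4}{a(\delta-1)}<1+\frac{8}{a}$.
\item For $\delta>2$: use $M\ge a(\delta-1)^2$. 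This gives less than $1+\frac{4}{a}$.
\end{itemize}
Note that $1+\frac{8}{a}=U_2(3/2)$. Also, $F(3/2)\ge 0$ (equivalently $x^*\le 3/2$) holds exactly when $s\ge 1$. So for $c\ge 2$ the monotonicity of $U_2$ bounds all three cases by $U_2(x^*)=U_1(x^*)$.
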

\begin{proof}
    We again distinguish between the following three cases: let $\delta^*>1$,

    \paragraph{Case 1} If $\max_{o\in Z'}d(o,C^*) \leq \delta^*T$, then by~\Cref{lem:opt_lowerbound3_}, for all $\lambda>0$, assume $\min_{y\in Y}d(y,C^*) = \lambda T$ then we have
    \[
    \begin{aligned}
        f(X\setminus O,C^*)
        \leq &~ f_z(X,C^*) + f(Z',C^*) - f(Y,C^*)\\
        \leq &~ f_z(X,C^*) + |Z'|\cdot (\delta^*T)^2 - |Y|\cdot (\lambda T)^2\\
        \leq &~ \begin{cases}
        f_z(X,C^*)\cdot \left(1 + \frac{(\delta^*)^2-\lambda^2}{\max\{\lambda^2, (c-l)(1-\lambda)^2\}}\right), \text{if }0\leq \lambda\leq 1\\
        f_z(X,C^*) \cdot (\delta^*)^2/\lambda^2, \text{if }\lambda >1
        \end{cases}~.
    \end{aligned}
    \]
    
    \paragraph{Case 2} If $\max_{o\in Z'}d(o,C^*) = \delta T$ for some $\delta > \delta^*$, and $d(y,C^*) \geq (\delta-1)T$ for all $y\in Y$, then by~\Cref{lem:opt_lowerbound3_}, we have
    \[
    \begin{aligned}
        f(X\setminus O,C^*)
        \leq &~ f_z(X,C^*) + f(Z',C^*) - f(Y,C^*)\\
        \leq &~ f_z(X,C^*) + |Z'|\cdot (\delta T)^2 - |Y|\cdot ((\delta-1)T)^2\\
        \leq &~ f_z(X,C^*)\cdot \left(1 + \frac{2\delta-1}{\max\{1, r-l\}\cdot (\delta-1)^2}\right)\\
        \leq &~ f_z(X,C^*)\cdot \left(1 + \frac{2\delta^*-1}{\max\{1, r-l\}\cdot (\delta^*-1)^2}\right) ~.
    \end{aligned}
    \]
    
    \paragraph{Case 3} If $\max_{o\in Z'}d(o,C^*) = \delta T$ for some $\delta > \delta^*$, and $\min_{y\in Y}d(y,C^*) =\lambda T$ for some $\lambda < \delta-1$, then by~\Cref{lem:opt_lowerbound3_} and~\Cref{lem:opt_lowerbound4_}, we have
    \[
    \begin{aligned}
        f(X\setminus O,C^*)
        \leq &~ f_z(X,C^*) + f(Z',C^*) - f(Y,C^*)\\
        \leq &~ f_z(X,C^*) + |Z'|\cdot (\delta T)^2 - |Y|\cdot (\lambda T)^2\\
        \leq &~ \begin{cases}
        f_z(X,C^*)\cdot \left(1 + \frac{\delta^2-\lambda^2}{\max\{(c-l)(1-\lambda)^2, (r-l)(\delta-1)^2\}}\right), \text{if }0\leq \lambda\leq 1\\
        f_z(X,C^*) \cdot \left(1 + \frac{\delta^2-\lambda^2}{\max\{\lambda^2, (r-l)(\delta-1)^2\}}\right), \text{if }\lambda >1
        \end{cases}~.
    \end{aligned}
    \]

    It is straightforward to see that to minimize the upper bounds, we should choose $r=c$ and $l=1$. It remains to optimize the choice of $\delta^*$ such that the maximum over the upper bounds in the three cases is minimized. Similar to~\Cref{lem:cost_upperbound1_}, the upper bound in Case 1 is maximized at $\lambda^* = \frac{\sqrt{c-1}}{1 + \sqrt{c-1}}$ and takes value $V_1(\delta^*)=\frac{(1+\sqrt{c-1})^2}{c-1}(\delta^*)^2$; the upper bound in Case 2 is $V_2(\delta^*)=1 + \frac{2\delta^*-1}{(c-1)\cdot (\delta^*-1)^2}$, and the upper bound in Case 3 is dominated by Case 2. The optimal choice of $\delta^*$ is given by $V_1(\delta^*)=V_2(\delta^*)$, which simplifies to 
    \[
    \left[\left(1 + \sqrt{c-1}\right)^2(\delta^*)^2 - (c-1)\right](\delta^*-1)^2 - (2\delta^*-1)=0 ~.
    \]
    Then in all three cases we have $f(X\setminus O,C^*) \leq f_z(X,C^*)\cdot V_1(\delta^*)$.
\end{proof}

Finally, we present a comparison of the approximation ratios in~\Cref{thm:main_result} and~\Cref{thm:main_result2} 

\begin{table}[h]
\caption{Approximation ratio for different $c$ for the Robust $k$-Means objective.}\label{tab:c_ratio}
\centering
\begin{tabular}{ccccccc}
\toprule
$c$-value & 2 & 3 & 4 & 5 & 10 \\
\midrule
\Cref{thm:main_result} & 14.30 & 9 & 7.04 & 5.98 & 3.99 \\
\Cref{thm:main_result2} & 9 & 5.98 & 4.84 & 4.21 & 2.96 \\
\bottomrule
\end{tabular}
\end{table}

\section{Extensions to $k$-Median and $k$-Center}
\label{app:extension}

This section analyzes the performance of our proposed methods when applied to the Robust $k$-Median and Robust $k$-Center objective. The extension to Robust $k$-Median is a straightforward adaptation of the proof strategy for Robust $k$-Means. 
The purpose of this section is to demonstrate that the proposed framework is not specific to Robust $k$-Means, and can simultaneously yield non-trivial theoretical guarantees across multiple robust clustering objectives.

\subsection{Robust $k$-Median}

Our proofs for~\Cref{thm:main_result} and~\Cref{thm:main_result2} can be directly extended to the Robust $k$-Median objective, by replacing all the squared distances by unsquared distances in the analysis. Here we present a formal upper bound of applying~\Cref{alg:OKMeans2} to the Robust $k$-Median objective, whose proof is similar to~\Cref{thm:main_result2} and is omitted.

\begin{thm}
\label{thm:extension_kMedian}
    Let $I=(X,k,z)$ be an instance of robust $k$-Median and $C^*$ be the optimal set of centers. Suppose $\mathcal{A}$ is a $\beta$-approximation algorithm for $k$-Median, and each cluster induced by $C^*$ has size at least $cz$, then there is an algorithm that calls $\mathcal{A}$ once and computes a set of centers $C$ such that 
    \[
    g_{z}(X,C) \leq \zeta(c)\beta\cdot g_z(X,C^*) ~,
    \]
    where $\zeta(c)=\max\left\{\frac{2c-1+\sqrt{4c+1}}{2(c-1)},\frac{c+1}{c-1}\right\}$.
\end{thm}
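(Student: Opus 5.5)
The plan is to rerun the three-case argument of \Cref{lem:cost_upperbound2_} with unsquared distances, using \Cref{alg:OKMeans2} with $l=1$ and $r=c$. The Jensen steps become plain averaging, so they are no longer needed. As before, fix $C^*$ and define $Y$, $Z'$ and $|Y|=|Z'|\ge 1$ as in the main text. Let the threshold separating the scores of $X\setminus O$ and $O$ be $(c-1)zT$. At the end, calling $\mathcal{A}$ on $X\setminus O$ and recomputing inliers gives $g_z(X,C)\le g(X\setminus O,C)\le \beta\, g(X\setminus O)$. So it suffices to show $g(X\setminus O,C^*)\le \zeta(c)\, g_z(X,C^*)$.

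\textbf{Step 1 (lower bounds).} I would prove the $k$-Median analogues of \Cref{lem:opt_lowerbound3_} and \Cref{lem:opt_lowerbound4_}.
\begin{itemize}
\item If $\min_{y\in Y}d(y,C^*)=\lambda T$ with $\lambda\in[0,1]$, then $g_z(X,C^*)\ge |Y|T\max\{\lambda,(c-1)(1-\lambda)\}$. The argument follows \Cref{lem:opt_lowerbound3_}: the $(c-1)z$ cluster-mates of $y$ beyond its $z$ nearest ones have average distance to $y$ at least $T$. By the triangle inequality their distances to $\closest(y,C^*)$ then sum to at least $(c-1)zT(1-\lambda)$.
\item If $\lambda>1$, then $g_z(X,C^*)\ge |Y|T\lambda$.
\item If $\max_{o\in Z'}d(o,C^*)=\delta T$ with $\delta>1$, then $g_z(X,C^*)\ge |Y|T(c-1)(\delta-1)$. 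This uses the $(c-1)z$ true inliers among the neighbors of $o$ whose average distance to $o$ is at most $T$, exactly as in \Cref{lem:opt_lowerbound4_}.
\end{itemize}

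\textbf{Step 2 (case analysis).} Start from $g(X\setminus O,C^*)\le g_z(X,C^*)+|Z'|\,\delta T-|Y|\,\lambda T$, fix a threshold $\delta^*>1$, and split into the same three cases.
\begin{itemize}
\item \emph{Case 1} ($\delta\le\delta^*$). The ratio is at most $1+\frac{\delta^*-\lambda}{\max\{\lambda,(c-1)(1-\lambda)\}}$, which is maximized where the two terms in the max are equal, i.e.\ $\lambda=\frac{c-1}{c}$. There it equals $\frac{c\,\delta^*}{c-1}$. For $\lambda>1$ the ratio is at most $\delta^*/\lambda<\delta^*$, which is smaller.
\item \emph{Case 2} ($\delta>\delta^*$ and every $y\in Y$ has $d(y,C^*)\ge(\delta-1)T$). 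The ratio is at most $1+\frac{1}{\max\{1,c-1\}(\delta-1)}\le 1+\frac{1}{(c-1)(\delta^*-1)}$, at least when $c\ge 2$.
\item \emph{Case 3} ($\lambda<\delta-1$). The ratio is at most $1+\frac{\delta-\lambda}{\max\{(c-1)(1-\lambda),(c-1)(\delta-1)\}}$ for $\lambda\le1$, and at most $1+\frac{\delta-\lambda}{\max\{\lambda,(c-1)(\delta-1)\}}$ for $\lambda>1$.
\end{itemize}

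\textbf{Step 3 (balancing).} Balance Cases 1 and 2 by setting $u=\delta^*-1$ and equating $\frac{c(1+u)}{c-1}=1+\frac{1}{(c-1)u}$. This gives $cu^2+u-1=0$, so $u=\frac{-1+\sqrt{4c+1}}{2c}$. Substituting back gives the first term of $\zeta(c)$, namely $\frac{2c-1+\sqrt{4c+1}}{2(c-1)}$.

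The main obstacle is Case 3. Unlike the $k$-Means proof, it is not dominated by Case 2 here, and that is where the second term $\frac{c+1}{c-1}$ comes from. I would maximize the Case 3 expression directly.
\begin{itemize}
\item For $\lambda\le 1$, the numerator is linear in $\lambda$ and $\delta$, and the denominator is a max of two linear terms. So the worst case lies on the boundary $1-\lambda=\delta-1$, or at $\lambda=0$ with $\delta$ near $2$. At $\lambda=0,\delta=2$ the value is $1+\frac{2}{c-1}=\frac{c+1}{c-1}$. I expect a short computation along the line $\lambda=2-\delta$ to show that this is the maximum.
\item For $\lambda>1$, the bound is at most $1+\frac{1}{c-1}$, which is smaller.
\end{itemize}

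I would also check small $c\in(1,2)$ separately, where $\max\{1,c-1\}$ and $(c-1)(1-\lambda)$ versus $\lambda$ behave differently. Taking the maximum over all cases then yields $\zeta(c)$.
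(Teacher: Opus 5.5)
Your proposal is correct and follows the route the paper indicates: the proof of \Cref{thm:main_result2} rerun with $l=1$, $r=c$ and unsquared distances, with Case~3 correctly identified as the source of the $\frac{c+1}{c-1}$ term, since its bound equals $\frac{c+1}{c-1}$ along the segment $\lambda=2-\delta$, $\delta\in(\max\{\delta^*,3/2\},2]$, and is smaller elsewhere. The $c\in(1,2)$ check you left open closes with the same $\delta^*$: there Case~2 gives $1+\frac{1}{\delta^*-1}=\frac{3+\sqrt{4c+1}}{2}$ and Case~1 gives $\frac{2c-1+\sqrt{4c+1}}{2(c-1)}$, and both are at most $\frac{c+1}{c-1}$ when $c\le 2$.
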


In the following table we present the value of $\zeta(c)$ for some values of $c$.

\begin{table}[h]
\caption{Approximation ratio for different $c$ for the Robust $k$-Median objective.}\label{tab:c_ratio_kMedian}
\centering
\begin{tabular}{ccccccc}
\toprule
$c$-value & 2 & 3 & 4 & 5 & 10 \\
\midrule
\Cref{thm:extension_kMedian} & 3 & 2.15 & 1.85 & 1.70 & 1.41 \\
\bottomrule
\end{tabular}
\end{table}

\subsection{Robust $k$-Center}

For the $k$-Center objective, both our methods achieve a $3$-approximate reduction. We give a brief justification for~\Cref{alg:OKMeans}. Let $r_{\max}$ be the largest radius among the optimal clusters, we have the following two properties.
\begin{observation}
    We have $r_x \leq 2r_{\max}$ for each true inlier $x$.
\end{observation}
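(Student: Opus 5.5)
The plan is a direct counting argument with the triangle inequality, in the same spirit as \Cref{lem:opt_lowerbound2_}. Here $r_{\max}$ is the $k$-Center radius of the optimal solution, so every true inlier is within distance $r_{\max}$ of its closest center in $C^*$.

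Fix a true inlier $x \in X_z(C^*)$ and let $c^* = \closest(x,C^*)$. Every point $y \in \cluster_z(x,C^*)$ is also a true inlier assigned to $c^*$, so $d(y,c^*) \le r_{\max}$. In particular $d(x,c^*) \le r_{\max}$, and the triangle inequality gives
\[
d(x,y) \le d(x,c^*) + d(c^*,y) \le 2r_{\max}.
\]
Hence $\cluster_z(x,C^*) \subseteq B_X(x, 2r_{\max})$. By the cluster-size assumption this ball contains at least $cz$ points of $X$.

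It remains to compare $cz$ with the neighbor rank $K = \frac{c+1}{2}z$ used by \Cref{alg:OKMeans}. Since $c>1$, we have $cz \ge \frac{c+1}{2}z = K$. So at least $K$ points of $X$ lie within distance $2r_{\max}$ of $x$. Therefore the $K$-th nearest neighbor $\NN_X(x,K)$ lies in $B_X(x,2r_{\max})$, and $r_x = d(x,\NN_X(x,K)) \le 2r_{\max}$, whatever tie-breaking rule is used.

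The only delicate point is whether $x$ itself counts as one of its own neighbors in $\NN_X(x,\cdot)$, since this shifts the count by one.
\begin{itemize}
\item If $x$ counts (the reading suggested by "$K$-th nearest neighbor among $X$"), the bound $cz \ge K$ suffices as shown above.
\item If $x$ is excluded, the cluster supplies $cz-1$ other points, and we need $cz - 1 \ge \frac{c+1}{2}z$, i.e.\ $(c-1)z \ge 2$. This holds, for example, when $c=3$ and $z\ge 1$.
\end{itemize}
I expect this bookkeeping to be the main thing to check. In the edge case where $(c-1)z < 2$ under the second convention, I would adopt the first convention, consistent with the integrality footnote.

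The same argument covers \Cref{alg:OKMeans2}, because every neighbor of rank up to $cz$ also lies in this ball.
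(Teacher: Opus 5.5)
Your argument is correct and is the same as the paper's: the triangle inequality through $\closest(x,C^*)$ puts all of $\cluster_z(x,C^*)$ inside $B_X(x,2r_{\max})$, and the cluster-size assumption ($3z$ points in the paper's setting) then supplies at least $\frac{c+1}{2}z$ points in that ball. Your self-counting fallback is never actually needed, since integrality of $\frac{c+1}{2}z$ together with $c>1$ gives $(c-1)z = 2\bigl(\frac{c+1}{2}z - z\bigr) \ge 2$.
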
 
\begin{proof}
    $B(x,2r_{\max})$ already completely contains $\cluster_z(x,C^*)$, which consists of at least $3z$ points.
\end{proof}
\begin{observation}
    If $r_o \leq 2r_{\max}$ for some true outlier $o$, then $d(r_o, C^*) \leq 3r_{\max}$.
\end{observation}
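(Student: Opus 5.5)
The plan is to prove the statement with a pigeonhole count inside the ball $B_X(o,T)$, where $o$ is the true outlier in question and $T=r_o$. The hypothesis $r_o \le 2r_{\max}$ says that $B_X(o,2r_{\max})$ contains at least $2z$ points of $X$, since $r_o$ is the distance to the $2z$-th nearest neighbor in \Cref{alg:OKMeans} with $c=3$. (The expression $d(r_o,C^*)$ in the statement should be read as $d(o,C^*)$.)

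\textbf{Step 1 (find an inlier near $o$).} There are exactly $z$ true outliers, and $o$ is one of them. So among the at least $2z$ points of $B_X(o,2r_{\max})$ (counting $o$ itself if the neighbor ranking includes it), at most $z$ are true outliers. Hence at least $z\ge 1$ of these points are true inliers. Fix one such inlier $x$; it satisfies $d(o,x)\le 2r_{\max}$.

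\textbf{Step 2 (triangle inequality).} Since $x$ is a true inlier, it lies in some optimal cluster. Every optimal cluster has radius at most $r_{\max}$, so $d(x,C^*)\le r_{\max}$. Combining this with Step 1 gives
\[
d(o,C^*)\le d(o,x)+d(x,C^*)\le 2r_{\max}+r_{\max}=3r_{\max},
\]
which is the claim.

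\textbf{Main obstacle.} The only delicate point is the counting convention in Step 1. It depends on whether $\NN_X(o,K)$ counts $o$ itself, so the count in $B_X(o,r_o)$ may be $2z$ or $2z+1$. In either case, after discarding at most $z$ true outliers, at least one true inlier remains, because $2z-z\ge 1$ when $z\ge 1$. If $z=0$ there are no outliers and the statement is vacuous. So the margin is ample, and no cluster-size assumption is needed for this observation beyond what produced the threshold $2r_{\max}$.
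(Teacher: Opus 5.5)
Your proof is correct and matches the paper's argument: at most $z$ of the at least $2z$ points in $B_X(o,r_o)\subseteq B_X(o,2r_{\max})$ are true outliers, so some true inlier $x$ lies within $2r_{\max}$ of $o$, and the triangle inequality with $d(x,C^*)\le r_{\max}$ gives $d(o,C^*)\le 3r_{\max}$. Reading $d(r_o,C^*)$ as $d(o,C^*)$ is right, and your remark about whether $o$ counts among its own neighbors only spells out a detail the paper leaves implicit.
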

\begin{proof}
    Since there are only $z$ outliers, $B(o,r_o)$ must contain at least one true inlier $x$, so if $r_o\leq 2r_{\max}$ then we must have
    \[
    d(o,\closest(x,C^*)) \leq d(o,x) + d(x,\closest(x,C^*)) \leq 2r_{\max}+r_{\max} = 3r_{\max} ~.  \qedhere
    \]
\end{proof}

The analysis for~\Cref{alg:OKMeans2} is similar.

However, since \citet{CharikarKMN01} already established an optimal $3$-approximate algorithm for robust $k$-Center without assumption on the optimal cluster sizes (they also showed that better-than-$3$ approximation is NP-hard), we chose not to elaborate on this aspect of our algorithm in the current paper. 

\newpage
\section{Additional Experimental Results}
\label{app:figures}

This section provides more experimental results and plots.

\begin{table*}[!ht]
\centering
\caption{Comparison results of robust $k$-means algorithms on each dataset. }
\label{tab:baseline_mean_std}
\footnotesize
\setlength{\tabcolsep}{3.2pt}
\renewcommand{\arraystretch}{0.92}
\resizebox{\textwidth}{!}{%
\begin{tabular}{l c c c @{\hspace{5mm}} l c c c}
\toprule
Method & Cost Mean $\pm$ Std $\downarrow$ & Recall $\uparrow$ & Time(s) $\downarrow$ & Method & Cost Mean $\pm$ Std $\downarrow$ & Recall $\uparrow$ & Time(s) $\downarrow$ \\
\midrule
\multicolumn{4}{c}{\textbf{SKIN-5}} & \multicolumn{4}{c}{\textbf{SKIN-10}} \\
\cmidrule(lr){1-4}\cmidrule(lr){5-8}
TIKMeans & \best{\scicost{6.151}{0.194}{4}} & \pmstd{0.742}{0.016} & \pmstd{3.314}{0.495} & TIKMeans & \best{\scicost{6.608}{0.131}{4}} & \pmstd{0.940}{0.002} & \pmstd{2.965}{0.178} \\
IKMeans & \scicost{7.109}{0.885}{4} & \best{\pmstd{0.767}{0.020}} & \second{\pmstd{1.444}{0.161}} & IKMeans & \scicost{7.512}{1.133}{4} & \second{\pmstd{0.950}{0.013}} & \second{\pmstd{1.368}{0.060}} \\
RobustKmeans++ & \scicost{7.620}{0.711}{4} & \pmstd{0.741}{0.033} & \pmstd{1.618}{0.012} & RobustKmeans++ & \scicost{8.857}{1.237}{4} & \pmstd{0.936}{0.010} & \pmstd{1.617}{0.034} \\
NKMeans & \scicost{6.887}{0.646}{4} & \pmstd{0.744}{0.015} & \pmstd{4.402}{0.105} & NKMeans & \scicost{7.057}{0.275}{4} & \pmstd{0.944}{0.016} & \pmstd{4.375}{0.069} \\
KMeans++ & \scicost{7.522}{1.464}{4} & \pmstd{0.749}{0.029} & \best{\pmstd{0.476}{0.037}} & KMeans++ & \scicost{1.018}{0.149}{5} & \pmstd{0.891}{0.013} & \best{\pmstd{0.480}{0.039}} \\
OKMeans (Ours) & \scicost{6.494}{0.312}{4} & \second{\pmstd{0.757}{0.011}} & \pmstd{2.056}{0.052} & OKMeans (Ours) & \scicost{6.982}{0.671}{4} & \pmstd{0.949}{0.013} & \pmstd{2.045}{0.045} \\
OKMeans2 (Ours) & \second{\scicost{6.410}{0.300}{4}} & \pmstd{0.746}{0.014} & \pmstd{2.395}{0.022} & OKMeans2 (Ours) & \second{\scicost{6.707}{0.166}{4}} & \best{\pmstd{0.955}{0.011}} & \pmstd{2.441}{0.057} \\
\midrule
\multicolumn{4}{c}{\textbf{SUSY-5}} & \multicolumn{4}{c}{\textbf{SUSY-10}} \\
\cmidrule(lr){1-4}\cmidrule(lr){5-8}
TIKMeans & \scicost{5.165}{0.048}{7} & \pmstd{0.759}{0.035} & \pmstd{94.824}{1.276} & TIKMeans & \scicost{5.272}{0.040}{7} & \pmstd{0.975}{0.002} & \pmstd{89.186}{4.053} \\
IKMeans & \scicost{5.279}{0.104}{7} & \pmstd{0.782}{0.049} & \pmstd{46.489}{2.589} & IKMeans & \scicost{5.334}{0.087}{7} & \pmstd{0.976}{0.004} & \pmstd{42.134}{2.536} \\
RobustKmeans++ & \scicost{5.791}{0.132}{7} & \pmstd{0.708}{0.072} & \pmstd{74.735}{2.058} & RobustKmeans++ & \scicost{5.966}{0.130}{7} & \pmstd{0.973}{0.006} & \pmstd{74.092}{1.903} \\
NKMeans & \second{\scicost{4.825}{0.022}{7}} & \pmstd{0.739}{0.014} & \pmstd{17.346}{0.281} & NKMeans & \scicost{4.941}{0.028}{7} & \pmstd{0.972}{0.002} & \pmstd{17.529}{0.456} \\
KMeans++ & \scicost{4.885}{0.087}{7} & \best{\pmstd{0.831}{0.012}} & \best{\pmstd{9.725}{0.111}} & KMeans++ & \scicost{5.173}{0.235}{7} & \pmstd{0.980}{0.004} & \best{\pmstd{9.728}{0.152}} \\
OKMeans (Ours) & \scicost{4.830}{0.064}{7} & \second{\pmstd{0.799}{0.020}} & \second{\pmstd{12.840}{0.353}} & OKMeans (Ours) & \best{\scicost{4.873}{0.023}{7}} & \second{\pmstd{0.982}{0.002}} & \second{\pmstd{12.919}{0.432}} \\
OKMeans2 (Ours) & \best{\scicost{4.801}{0.034}{7}} & \pmstd{0.798}{0.018} & \pmstd{13.469}{0.450} & OKMeans2 (Ours) & \second{\scicost{4.876}{0.020}{7}} & \best{\pmstd{0.982}{0.002}} & \pmstd{13.407}{0.286} \\
\midrule
\multicolumn{4}{c}{\textbf{SHUTTLE}} & \multicolumn{4}{c}{\textbf{KDDFULL}} \\
\cmidrule(lr){1-4}\cmidrule(lr){5-8}
TIKMeans & \best{\scicost{6.122}{0.260}{4}} & \pmstd{0.141}{0.142} & \pmstd{0.543}{0.008} & TIKMeans & \best{\scicost{3.320}{0.002}{7}} & \pmstd{0.601}{0.003} & \pmstd{49.850}{1.626} \\
IKMeans & \scicost{7.277}{0.503}{4} & \pmstd{0.141}{0.142} & \best{\pmstd{0.243}{0.003}} & IKMeans & \scicost{3.937}{0.861}{7} & \pmstd{0.615}{0.022} & \pmstd{21.419}{0.834} \\
RobustKmeans++ & \scicost{7.830}{1.193}{4} & \pmstd{0.094}{0.121} & \pmstd{0.363}{0.011} & RobustKmeans++ & \scicost{4.845}{1.260}{7} & \pmstd{0.611}{0.040} & \pmstd{40.355}{1.482} \\
NKMeans & \scicost{7.260}{0.353}{4} & \best{\pmstd{0.171}{0.061}} & \pmstd{4.785}{0.095} & NKMeans & \second{\scicost{3.466}{0.178}{7}} & \best{\pmstd{0.624}{0.013}} & \pmstd{14.859}{0.127} \\
KMeans++ & \scicost{7.055}{0.705}{4} & \pmstd{0.082}{0.127} & \second{\pmstd{0.294}{0.082}} & KMeans++ & \scicost{6.123}{2.152}{7} & \pmstd{0.553}{0.113} & \best{\pmstd{6.894}{0.104}} \\
OKMeans (Ours) & \second{\scicost{6.823}{0.625}{4}} & \second{\pmstd{0.171}{0.116}} & \pmstd{0.785}{0.074} & OKMeans (Ours) & \scicost{4.019}{0.770}{7} & \second{\pmstd{0.616}{0.010}} & \second{\pmstd{9.510}{0.062}} \\
OKMeans2 (Ours) & \scicost{6.823}{0.625}{4} & \pmstd{0.171}{0.116} & \pmstd{0.776}{0.058} & OKMeans2 (Ours) & \scicost{4.250}{0.919}{7} & \pmstd{0.613}{0.013} & \pmstd{10.269}{0.574} \\
\bottomrule
\end{tabular}%
}
\end{table*}

\begin{table*}[!ht]
\centering
\caption{Performance of the KNN heuristic with constant $K$. }
\label{tab:param_okmeans_bad}
\footnotesize
\setlength{\tabcolsep}{3.5pt}
\renewcommand{\arraystretch}{1.2}
\begin{tabular}{ll cccc}
\toprule
Dataset & Metric & KNN ($K=2$) & KNN ($K=5$) & KNN ($K=10$) & OKMeans \\
\midrule
\multirow{3}{*}{\textbf{SKIN-5}} & Cost & \scicost{6.290}{0.275}{4} & \scicost{6.822}{0.417}{4} & \scicost{6.577}{0.214}{4} & \scicost{6.494}{0.312}{4} \\
 & Recall & \pmstd{0.760}{0.008} & \pmstd{0.758}{0.016} & \pmstd{0.758}{0.012} & \pmstd{0.757}{0.011} \\
 & Time (s) & \pmstd{1.907}{0.178} & \pmstd{1.859}{0.130} & \pmstd{1.929}{0.139} & \pmstd{2.056}{0.052} \\
\midrule
\multirow{3}{*}{\textbf{SKIN-10}} & Cost & \scicost{6.838}{0.385}{4} & \scicost{6.911}{0.585}{4} & \scicost{6.773}{0.449}{4} & \scicost{6.982}{0.671}{4} \\
 & Recall & \pmstd{0.953}{0.014} & \pmstd{0.950}{0.016} & \pmstd{0.953}{0.012} & \pmstd{0.949}{0.013} \\
 & Time (s) & \pmstd{1.888}{0.151} & \pmstd{1.864}{0.149} & \pmstd{1.927}{0.175} & \pmstd{2.045}{0.045} \\
\midrule
\multirow{3}{*}{\textbf{SUSY-5}} & Cost & \scicost{4.815}{0.024}{7} & \scicost{4.804}{0.022}{7} & \scicost{4.790}{0.030}{7} & \scicost{4.830}{0.064}{7} \\
 & Recall & \pmstd{0.832}{0.009} & \pmstd{0.826}{0.011} & \pmstd{0.824}{0.010} & \pmstd{0.799}{0.020} \\
 & Time (s) & \pmstd{11.935}{0.203} & \pmstd{11.867}{0.272} & \pmstd{11.892}{0.213} & \pmstd{12.840}{0.353} \\
\midrule
\multirow{3}{*}{\textbf{SUSY-10}} & Cost & \scicost{4.890}{0.039}{7} & \scicost{4.891}{0.065}{7} & \scicost{4.897}{0.062}{7} & \scicost{4.873}{0.023}{7} \\
 & Recall & \pmstd{0.982}{0.002} & \pmstd{0.982}{0.002} & \pmstd{0.982}{0.002} & \pmstd{0.982}{0.002} \\
 & Time (s) & \pmstd{11.805}{0.242} & \pmstd{11.968}{0.231} & \pmstd{11.841}{0.262} & \pmstd{12.919}{0.432} \\
\midrule
\multirow{3}{*}{\textbf{SHUTTLE}} & Cost & \scicost{6.717}{0.679}{4} & \scicost{7.152}{0.459}{4} & \scicost{6.711}{0.707}{4} & \scicost{6.823}{0.625}{4} \\
 & Recall & \pmstd{0.124}{0.116} & \pmstd{0.171}{0.113} & \pmstd{0.147}{0.124} & \pmstd{0.171}{0.116} \\
 & Time (s) & \pmstd{1.769}{0.153} & \pmstd{1.752}{0.172} & \pmstd{1.741}{0.139} & \pmstd{0.785}{0.074} \\
\midrule
\multirow{3}{*}{\textbf{KDDFULL}} & Cost & \scicost{4.612}{1.244}{7} & \scicost{4.371}{1.087}{7} & \scicost{4.631}{1.366}{7} & \scicost{4.019}{0.770}{7} \\
 & Recall & \pmstd{0.560}{0.089} & \pmstd{0.521}{0.120} & \pmstd{0.569}{0.074} & \pmstd{0.616}{0.010} \\
 & Time (s) & \pmstd{9.141}{0.568} & \pmstd{9.273}{0.458} & \pmstd{9.104}{0.605} & \pmstd{9.510}{0.062} \\
\bottomrule
\end{tabular}
\end{table*}

\begin{table*}[!ht]
\centering
\caption{Parameter sensitivity analysis of OKMeans and OKMeans2 under different values of $c$.}
\label{tab:param_scaling}
\footnotesize
\setlength{\tabcolsep}{2.8pt} 
\renewcommand{\arraystretch}{1.1}
\resizebox{\textwidth}{!}{
\begin{tabular}{l l c c c c c c c}
\toprule
Dataset & Method & $c=3$ & $c=4$ & $c=5$ & $c=7$ & $c=10$ & $c=15$ & $c=20$ \\
\midrule
\multirow{2}{*}{\textbf{SKIN-5}} 
 & OKMeans & $6.688 \times 10^{4}$ & $6.479 \times 10^{4}$ & \second{6.347 \times 10^{4}} & $6.605 \times 10^{4}$ & $6.613 \times 10^{4}$ & \best{6.257 \times 10^{4}} & $6.736 \times 10^{4}$ \\
 & OKMeans2 & $6.670 \times 10^{4}$ & \best{6.367 \times 10^{4}} & $6.624 \times 10^{4}$ & \second{6.533 \times 10^{4}} & $6.544 \times 10^{4}$ & $6.691 \times 10^{4}$ & $6.708 \times 10^{4}$ \\
\midrule
\multirow{2}{*}{\textbf{SKIN-10}} 
 & OKMeans & $7.053 \times 10^{4}$ & $7.013 \times 10^{4}$ & \second{6.821 \times 10^{4}} & \best{6.778 \times 10^{4}} & $6.963 \times 10^{4}$ & $7.144 \times 10^{4}$ & $6.836 \times 10^{4}$ \\
 & OKMeans2 & $6.925 \times 10^{4}$ & \best{6.682 \times 10^{4}} & \second{6.824 \times 10^{4}} & $6.980 \times 10^{4}$ & $7.006 \times 10^{4}$ & $7.114 \times 10^{4}$ & $6.996 \times 10^{4}$ \\
\midrule
\multirow{2}{*}{\textbf{SUSY-5}} 
 & OKMeans & \second{4.830 \times 10^{7}} & $4.861 \times 10^{7}$ & \best{4.814 \times 10^{7}} & $4.843 \times 10^{7}$ & $4.845 \times 10^{7}$ & $4.847 \times 10^{7}$ & $4.864 \times 10^{7}$ \\
 & OKMeans2 & \best{4.801 \times 10^{7}} & $4.863 \times 10^{7}$ & \second{4.823 \times 10^{7}} & $4.849 \times 10^{7}$ & $4.823 \times 10^{7}$ & $4.841 \times 10^{7}$ & $4.830 \times 10^{7}$ \\
\midrule
\multirow{2}{*}{\textbf{SUSY-10}} 
 & OKMeans & $4.873 \times 10^{7}$ & $4.868 \times 10^{7}$ & $4.868 \times 10^{7}$ & \best{4.865 \times 10^{7}} & \second{4.868 \times 10^{7}} & $4.891 \times 10^{7}$ & $4.882 \times 10^{7}$ \\
 & OKMeans2 & $4.876 \times 10^{7}$ & $4.873 \times 10^{7}$ & \second{4.866 \times 10^{7}} & $4.867 \times 10^{7}$ & \best{4.866 \times 10^{7}} & $4.880 \times 10^{7}$ & $4.888 \times 10^{7}$ \\
\midrule
\multirow{2}{*}{\textbf{SHUTTLE}} 
 & OKMeans & \best{6.823 \times 10^{4}} & \second{6.823 \times 10^{4}} & $6.823 \times 10^{4}$ & $6.823 \times 10^{4}$ & $6.889 \times 10^{4}$ & $6.998 \times 10^{4}$ & $6.998 \times 10^{4}$ \\
 & OKMeans2 & \best{6.823 \times 10^{4}} & \second{6.823 \times 10^{4}} & $6.823 \times 10^{4}$ & $6.823 \times 10^{4}$ & $6.823 \times 10^{4}$ & $6.823 \times 10^{4}$ & $6.998 \times 10^{4}$ \\
\midrule
\multirow{2}{*}{\textbf{KDDFULL}} 
 & OKMeans & $4.019 \times 10^{7}$ & $4.053 \times 10^{7}$ & $4.395 \times 10^{7}$ & \best{3.586 \times 10^{7}} & $4.143 \times 10^{7}$ & $4.251 \times 10^{7}$ & \second{3.640 \times 10^{7}} \\
 & OKMeans2 & $4.250 \times 10^{7}$ & $4.082 \times 10^{7}$ & $3.988 \times 10^{7}$ & $4.302 \times 10^{7}$ & \second{3.967 \times 10^{7}} & \best{3.770 \times 10^{7}} & $4.087 \times 10^{7}$ \\
\bottomrule
\end{tabular}
}
\end{table*}




\begin{figure}[!ht]
    \centering
    \includegraphics[width=\textwidth]{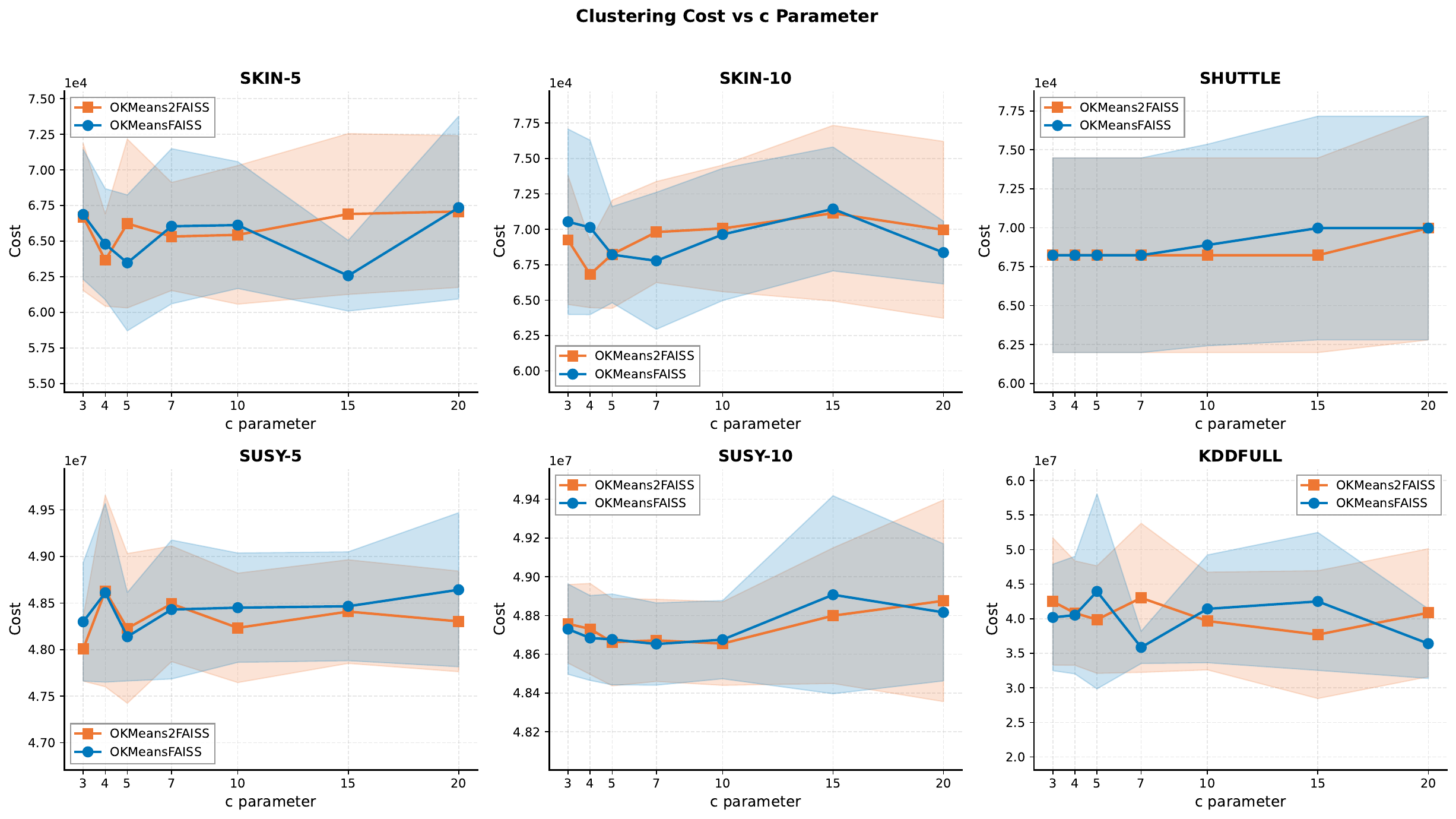}
    \caption{Parameter Sensitivity Cost Comparison}
    \label{fig:param_cost_comparison}
\end{figure}

\begin{figure}[!ht]
    \centering
    \includegraphics[width=\textwidth]{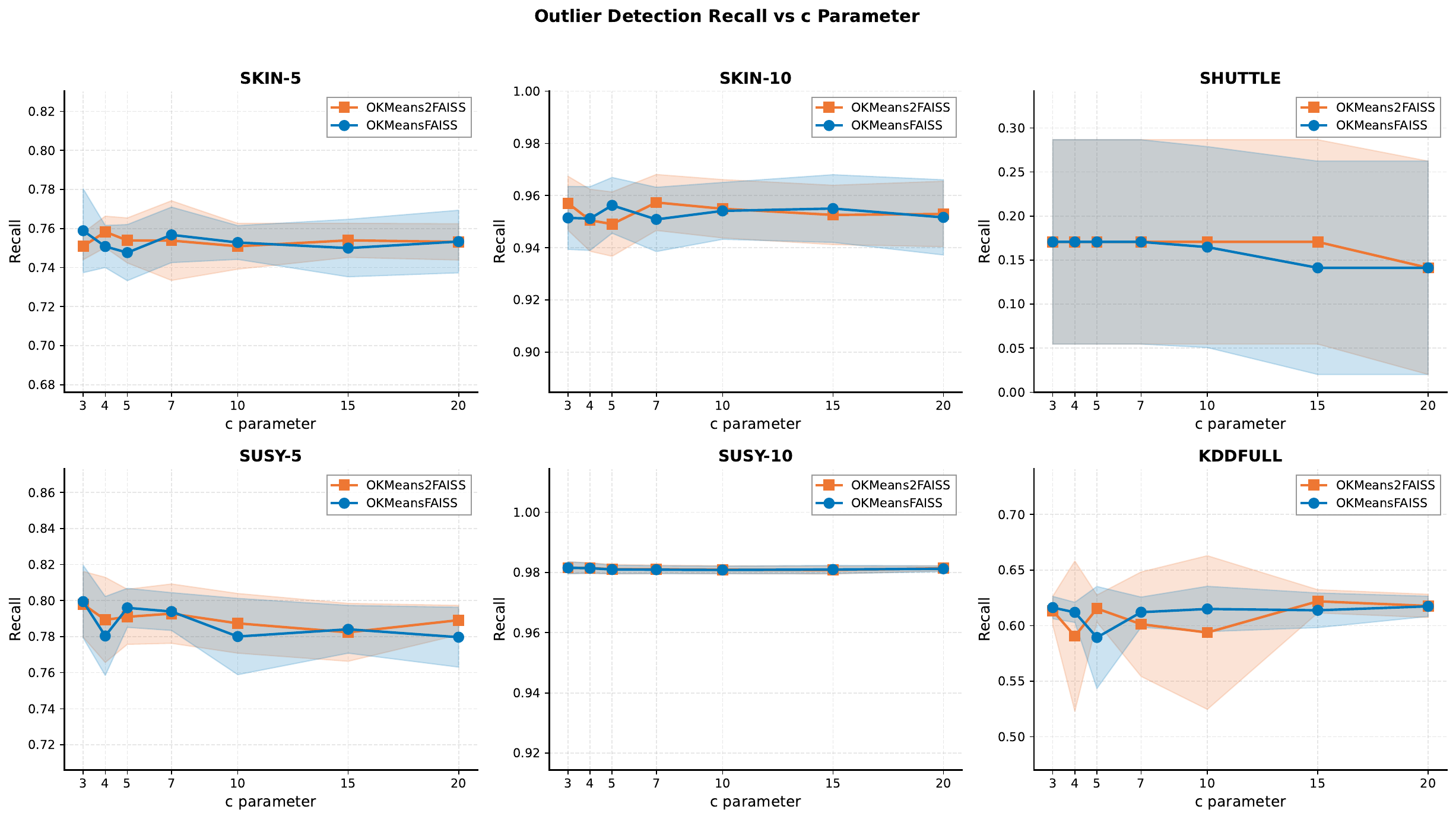}
    \caption{Parameter Sensitivity Recall Comparison}
    \label{fig:param_recall_comparison}
\end{figure}

\begin{figure}[!ht]
    \centering
    \includegraphics[width=\textwidth]{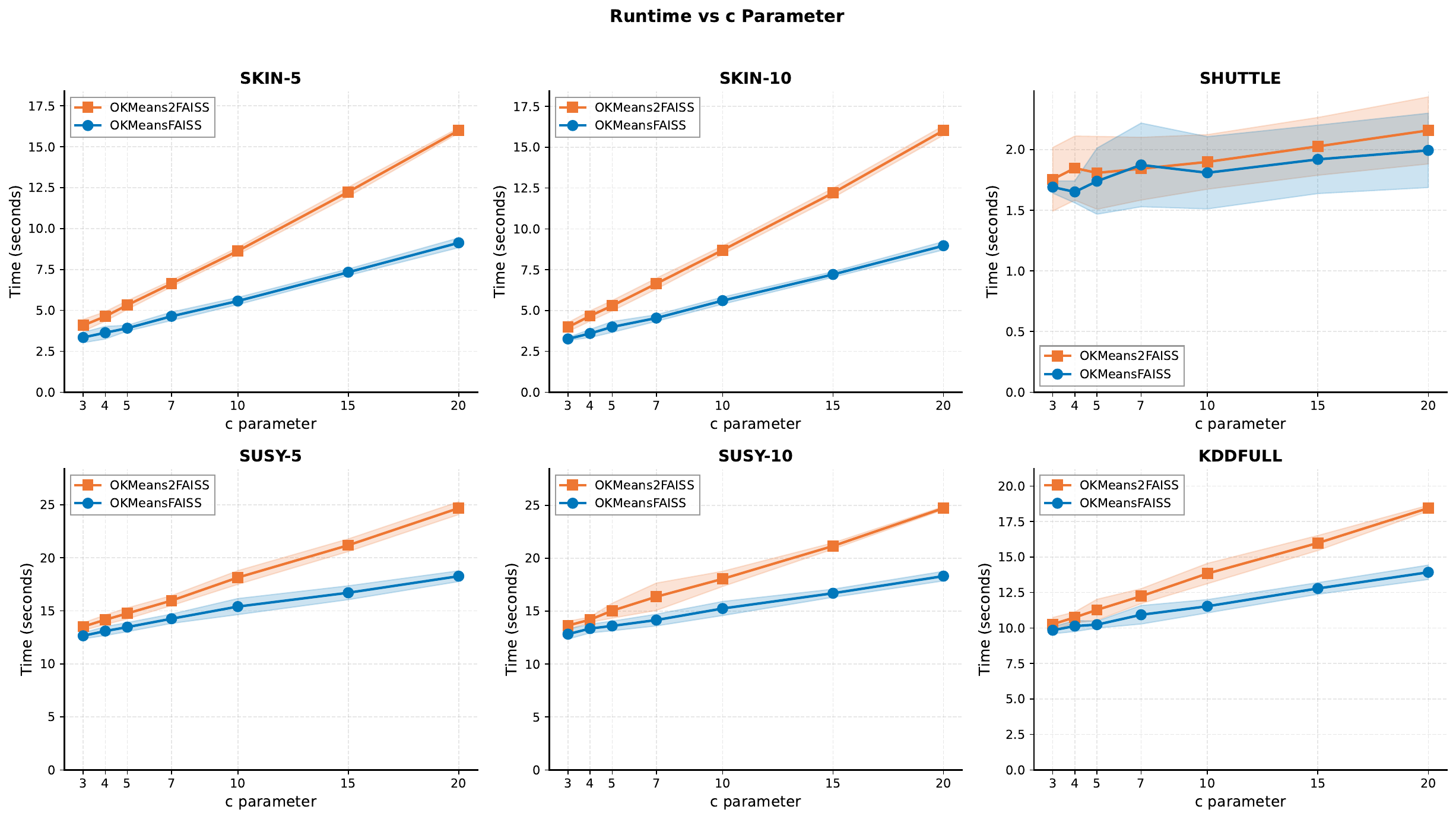}
    \caption{Parameter Sensitivity Time Comparison}
    \label{fig:param_time_comparison}
\end{figure}

\clearpage

\end{document}